\documentclass[11pt]{article}

\usepackage[final]{acl}

\usepackage{times}
\usepackage{latexsym}

\usepackage[T1]{fontenc}

\usepackage[utf8]{inputenc}

\usepackage{microtype}

\usepackage{inconsolata}

\usepackage{graphicx}
\usepackage{hyperref}       
\usepackage{url}            
\usepackage{booktabs}       
\usepackage{amsfonts}       
\usepackage{nicefrac}       
\usepackage{microtype}      
\usepackage{lipsum}		    
\usepackage{graphicx}
\usepackage{natbib}
\usepackage{doi}
\usepackage{amsmath}
\usepackage{wrapfig}
\usepackage{titletoc}
\usepackage{amsmath}
\usepackage{amssymb}
\usepackage{mathtools}
\usepackage{amsthm}
\usepackage{bm}
\usepackage{multirow}
\usepackage{tcolorbox}
\tcbuselibrary{breakable}
\usepackage[toc]{appendix}
\usepackage{minitoc}
\usepackage[capitalize,noabbrev]{cleveref}
\usepackage{authblk}
\usepackage{enumitem}
\usepackage{float}
\usepackage{makecell}
\usepackage[table]{xcolor}
\usepackage{subcaption}

\usepackage{colortbl}
\usepackage{tcolorbox}
\tcbuselibrary{breakable}

\theoremstyle{plain}
\newtheorem{theorem}{Theorem}[section]
\newtheorem{proposition}[theorem]{Proposition}
\newtheorem{lemma}[theorem]{Lemma}

\theoremstyle{definition}
\newtheorem{definition}[theorem]{Definition}

\theoremstyle{remark}
\newtheorem{remark}[theorem]{Remark}

\newcommand{\ourmethod}{R\textsuperscript{2}-MAD}
\newcommand{\fullmethod}{\textbf{R}emember and \textbf{R}eweight for \textbf{M}ulti-\textbf{A}gent \textbf{D}ebate}
\newcommand{\madmm}{MAD-M\textsuperscript{2}}

\title{Remember and Reweight: Enhancing Multi-Agent Debate with \\ Experience Memory and Confidence Estimation}

\author{
{\bf Xuanfa Jin}$^{1,2}$ \quad {\bf Zhijian Ma}$^{1,2}$ \quad {\bf Yongcheng Zeng}$^{1,2}$ \quad {\bf Xinyu Cui}$^{1,2}$ \\ {\bf Haifeng Zhang}$^{1,2*}$ \quad {\bf Jun Wang}$^3$\thanks{Corresponding authors.} \\

$^1$Institute of Automation, Chinese Academy of Sciences \\
$^2$University of Chinese Academy of Sciences \quad $^3$University College London \\

\texttt{\{jinxuanfa2022,mazhijian2024,zengyongcheng2022,cuixinyu2021\}@ia.ac.cn} \\
\texttt{haifeng.zhang@ia.ac.cn,jun.wang@cs.ucl.ac.uk}
}

\begin{document}

\maketitle

\begin{abstract}
Multi-agent debate (MAD) improves the reasoning capabilities of large language models by having multiple agents iteratively refine their responses through discussion. However, MAD suffers from a critical vulnerability known as shared misconception: when a majority of agents initially converge on an incorrect answer, the debate process tends to amplify rather than correct the error. Existing methods primarily address peer skew but leave the agents' inherently biased concept priors unaddressed. To mitigate this systematic weakness, we propose \textbf{\ourmethod{}} (\fullmethod{}), a framework that equips agents with an experience memory accumulated from past debates. \ourmethod{} intervenes on both failure modes through two complementary mechanisms: A debate-state-aware retrieval policy dynamically calibrates the concept prior by retrieving relevant historical evidence based on the current consensus level. Then these retrieved experiences provide a basis for estimating per-agent reliability, yielding confidence weights to modulate peer influence. Experiments on various benchmarks show that \ourmethod{} achieves consistent improvements over existing single-agent and MAD baselines.
\end{abstract}

\section{Introduction}
Multi-agent debate (MAD) has emerged as a promising paradigm for improving the reasoning capabilities of large language models (LLMs) \citep{du2024improving, liang2024encouraging}. By having multiple LLM-based agents iteratively discuss and refine their responses, MAD has demonstrated consistent gains over single-agent baselines across a range of tasks, including reasoning \citep{zhu2026demystifying, ling2025memad}, evaluation \citep{chan2024chateval}, and problem solving \citep{li2025swe}. The underlying intuition is appealing: diverse agents can challenge each other's errors, and iterative refinement drives the group toward better answers. However, this optimistic picture obscures a critical vulnerability. When a majority of agents happen to initially converge on the same incorrect answer, which we refer to as \emph{shared misconception} \citep{estornell2024multi}, the debate process does not merely fail to correct the error but tends to amplify it. Agents that originally held the correct answer are progressively persuaded to abandon their stances in favor of the majority's consensus. This failure mode is not an occasional anomaly but a systematic weakness of the MAD framework \citep{wynn2025talk}. To mitigate it, we must first understand why debate fails under this regime and which factors contribute to the amplification of the initial error.

\begin{figure}
    \centering
    \includegraphics[width=\linewidth]{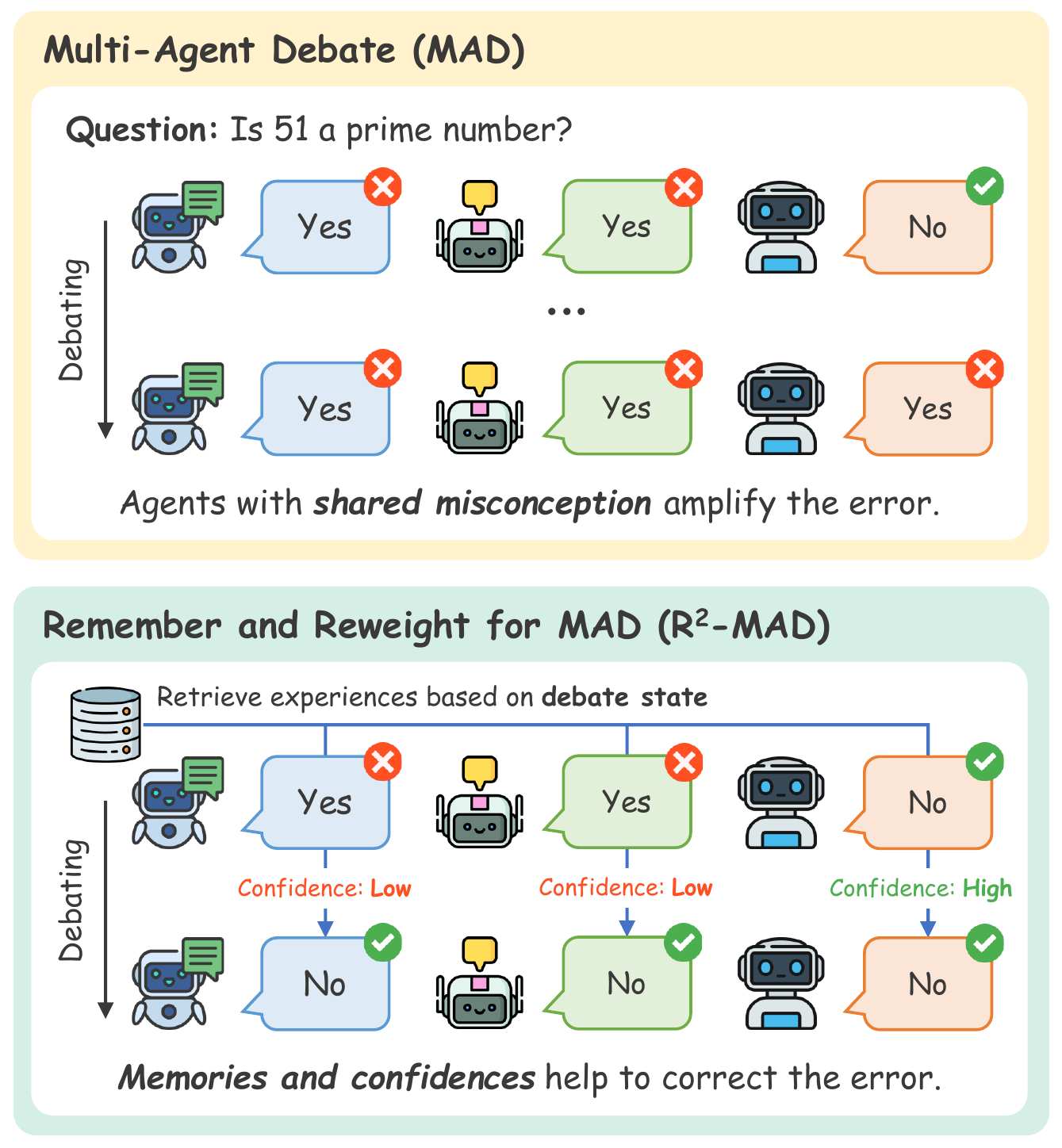}
    \vspace{-0.6cm}
    \caption{An illustrative example of the shared misconception problem. In standard MAD (top), the two agents holding the incorrect answer persuade the correct agent to switch, resulting in a wrong consensus. In \ourmethod{} (bottom), retrieved experiences and confidence weighting help agents resist the erroneous majority and converge to the correct answer.}
    \label{fig:example}
    \vspace{-0.6cm}
\end{figure}

Recent theoretical analysis provides useful insight into this question. \citet{estornell2024multi} show that under the latent concept framework, each agent's generation during debate can be decomposed into two factors: a \emph{concept prior}, which encodes the agent's intrinsic belief about the correct answer, and a \emph{peer skew}, which captures the cumulative influence of other agents' responses. Under shared misconception, these two factors compound: the prior is already biased toward the erroneous concept, and the peer skew amplifies this bias at a rate proportional to the number of agreeing agents. This analysis suggests a natural design principle: effective mitigation should intervene on both factors.  But existing approaches \citep{li2024sparse, liu2024groupdebate, tian2026multi} focus on manipulating other agents' responses to reduce the peer skew, leaving the biased prior unaddressed. This motivates us to explore mechanisms that can simultaneously correct the prior and modulate the skew, and we find that experiences from past debates offer a natural substrate for both.

Based on this insight, we propose \textbf{\ourmethod{}} (Remember and Reweight for Multi-Agent Debate), a framework that equips each agent with an experience memory accumulated from past debates. The memory serves two complementary purposes. First, agents retrieve relevant experiences to calibrate their prior beliefs with historical evidence. The retrieval is guided by a debate-state-aware policy that adapts to the current level of consensus: when consensus is high, retrieval favors diverse and contrastive experiences that challenge the majority; when agents remain divided, retrieval prioritizes experiences associated with positive outcomes. Second, by examining how each agent's stance has performed in historically similar situations, the retrieved experiences provide a basis for estimating per-agent reliability, which is then used as confidence weights to modulate peer influence during debate. The two mechanisms target the prior and the skew, respectively, and can be naturally combined within a unified framework.

We evaluate \ourmethod{} on various benchmarks. Experimental results show that \ourmethod{} outperforms single-agent methods and achieves consistent improvements over existing MAD baselines. Ablation studies confirm that both the memory-based prior correction and the confidence weighting contribute to the overall gains. Our main contributions are summarized as follows: (1) We propose \ourmethod{}, a framework that mitigates shared misconceptions in multi-agent debate by leveraging experience memory to intervene on both the concept prior and the peer skew. (2) We provide supplementary theoretical analysis that extends the existing latent concept decomposition framework to accommodate memory and confidence weighting, offering formal justification for the proposed design. (3) We conduct extensive experiments on multiple benchmarks, demonstrating the effectiveness of \ourmethod{} and validating the contribution of each component.\footnote{Code is available in \url{https://github.com/KylJin/R2-MAD}.}

\section{Related Work}
\paragraph{Multi-Agent Debate.}
The multi-agent debate (MAD) framework leverages collaborative interactions among multiple LLM-based agents to improve reasoning \citep{du2024improving}. Following this paradigm, prior work has explored divergent debate \citep{liang2024encouraging}, LLM-as-judge evaluation \citep{chan2024chateval}, efficient communication structures \citep{li2024sparse}, and other applications \citep{chen2024reconcile,jin2024learning}. To further improve debate quality, several methods have been proposed to refine how agents interact: diversity pruning removes near-duplicate responses to encourage broader exploration \citep{estornell2024multi}, selective masking filters unreliable messages from previous rounds to prevent error propagation \citep{tian2026multi}, and FreeMaD replaces rigid turn-taking with flexible, asynchronous updates \citep{cui2025freemad}. Other studies have diagnosed remaining failure modes, including uncalibrated confidence expression and insufficient agent diversity \citep{lin2025enhancing, zhu2026demystifying}, as well as costly message passing that motivates equilibrium-based formulations \citep{yi2025debate}. Complementary to these efforts, \ourmethod{} introduces a cross-debate perspective by using experiences accumulated from past debates to calibrate both agent beliefs and peer influence.

\paragraph{Memory-Based LLM Agents.}
Memory has become an important component of LLM agents. Previous work has used memory to store observations and reflections for more believable long-term behavior \citep{park2023generative}, feedback for iterative self-improvement \citep{shinn2023reflexion}, reusable skills acquired from interaction \citep{wang2023voyager}, and long-term user or interaction histories \citep{zhong2024memorybank, packer2023memgpt}. Beyond storing past interactions, recent memory-augmented methods have improved how memories are retrieved, compressed, and organized, ranging from memory-inspired retrieval \citep{qian2024memorag} and question-reflection memory \citep{wang2024qrmem} to reversible compression \citep{wang2025r3mem} and agent-level dynamic memory organization \citep{xu2025amem}. When it comes to MAD, \madmm{} \citep{tian2026multi} tries to mask unreliable memories from previous debate rounds to prevent error propagation, while MeMAD \citep{ling2025memad} stores structured debate transcripts and retrieves relevant past experiences to guide future reasoning. Compared to them, \ourmethod{} further leverages retrieved experiences to estimate per-agent reliability for confidence weighting, and dynamically adjusts its retrieval strategy based on the evolving debate state.

\begin{figure*}[t]
  \centering
  \includegraphics[width=\linewidth]{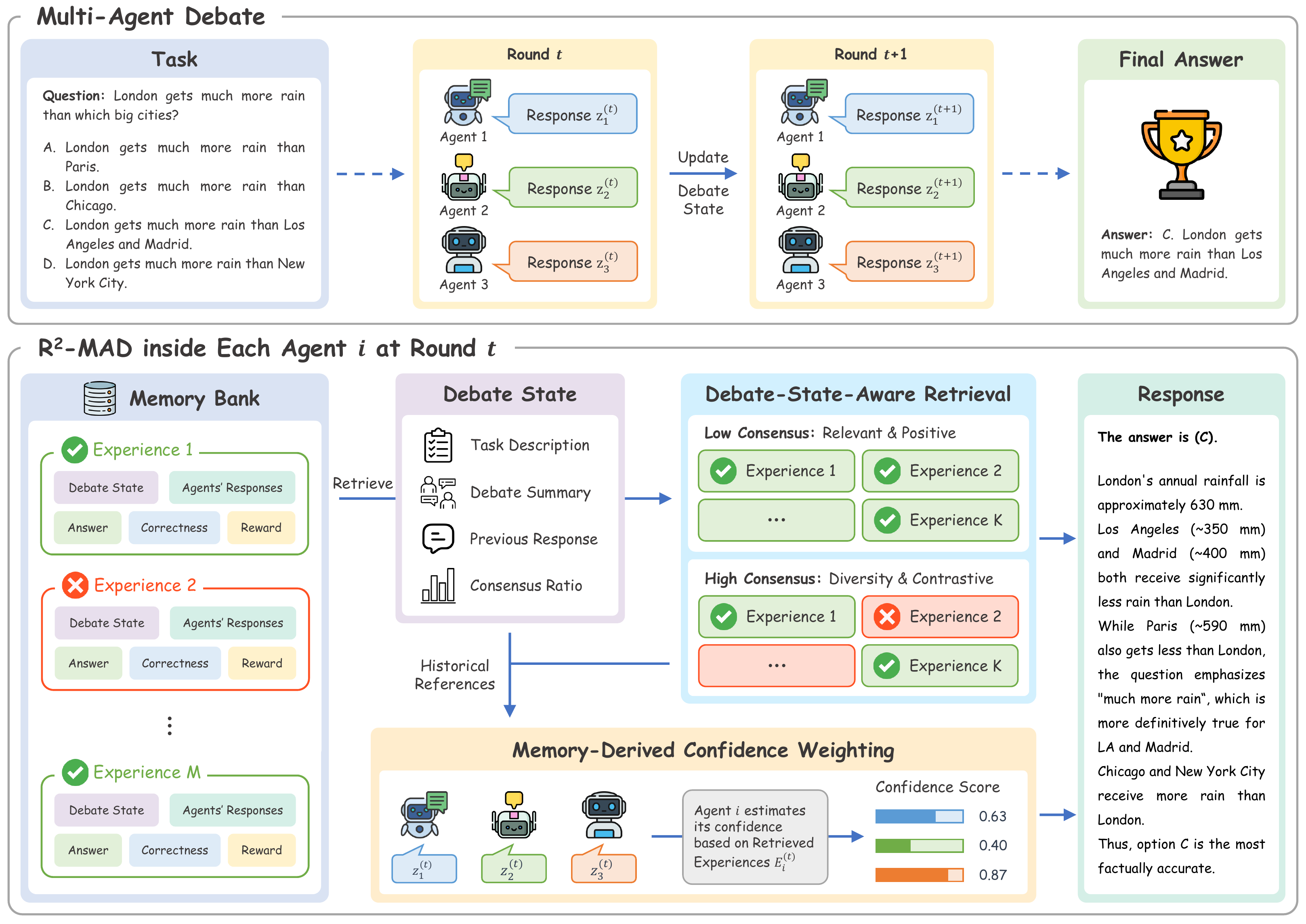}
  \vspace{-0.7cm}
  \caption{Overview of the \ourmethod{} framework. At each debate round, agent \(i\) retrieves relevant experiences from its memory bank via a debate-state-aware policy, uses them to calibrate its prior concepts and estimate confidence weights for each peer, and then generates an updated response.}
  \label{fig:framework}
  \vspace{-0.5cm}
\end{figure*}

\section{Preliminary}
\paragraph{Multi-Agent Debate Framework.}
Given a task \(x\) with a target answer \(y\), the Multi-Agent Debate (MAD) framework \citep{du2024improving} aims to leverage a collective of \(n\) LLM-based agents to iteratively resolve the task over \(T\) rounds. Let \(\phi_i\) denote the parameters of agent \(i\) (\textit{e.g.}, model weights, training data, or prior contexts). At the initial round \(t = 0\), each agent \(i\) independently generates a response \(z_i^{(0)}\) based on the input \(x\). For subsequent rounds \(0 < t \leq T\), agent \(i\) updates its stance by observing the joint responses of all agents from the previous round \(Z^{(t-1)} = (z_1^{(t-1)}, \dots, z_n^{(t-1)})\). Formally, the generation probability of agent \(i\) at round \(t\) is defined as:
\begin{equation}
    \mathbb{P} \left( z_i^{(t)} \Big| x, Z^{(t-1)}, \phi_i \right),
\end{equation}
where both the contextual input \((x, Z^{(t-1)})\) and the internal parameter \(\phi_i\) govern the generation process. This iterative debate continues until a maximum horizon \(T\) is reached or a consensus is established. Finally, to extract answers from agents' responses, we define \(a(\cdot)\) as an answer extraction function.

\paragraph{Latent Concept Decomposition.}
To model the underlying reasoning process in multi-agent debate, the generation dynamics can be analyzed in terms of a latent concept space \(\Theta\) \citep{xie2021explanation}. Under this paradigm, each task-answer pair \((x, y)\) is assumed to be generated from a true latent concept \(\theta^\star \in \Theta\). By introducing a conditional independence assumption, where an agent's response \(z_i^{(t+1)}\) depends solely on the latent concept \(\theta\) and its parameters \(\phi_i\) once \(\theta\) is given, \citet{estornell2024multi} derives a \emph{skew decomposition} for the generation probability:
\begin{align} \label{eq:estornell}
    & \mathbb{P} \left( z_i^{(t+1)} \Big| x, Z^{(t)}, \phi_i \right) \propto \sum_{\theta \in \Theta} \Big[ \mathbb{P} \left( z_i^{(t+1)} \Big| \theta, \phi_i \right) \notag \\
    & \mathbb{P} \left( x | \theta, \phi_i \right) \mathbb{P} \left( \theta | \phi_i \right) \prod_{j=1}^{n} \mathbb{P} \left( z_j^{(t)} \Big| \theta, \phi_i \right) \Big].
\end{align}
Here, the formulation explicitly factorizes the debate process into two distinct components: the agent's intrinsic generation capability independent of its peers, and the coordination bias or skew introduced by peer interactions.

\section{\ourmethod{} Framework}
\label{sec:method}

\subsection{Overview}
The latent concept decomposition reveals two distinct factors governing an agent's generation during debate: the \emph{concept prior} \(\mathbb{P}(\theta | \phi_i)\), which encodes the agent's intrinsic belief, and the \emph{peer skew} \(\prod_j \mathbb{P}(z_j^{(t)} | \theta, \phi_i)\), which captures the cumulative influence of other agents. When a shared misconception arises, both factors work against correctness. The prior is already biased toward an erroneous concept \(\theta^\prime\), and the skew amplifies this bias as more agents echo the same mistake. 

To address both failure modes, we propose \textbf{\ourmethod{}} (Remember and Reweight for Multi-Agent Debate), which augments each agent with an experience memory populated from past debates. Under a natural conditional independence assumption (detailed in Appendix~\ref{appendix:proof}), the generation probability under our framework is
\begin{align} \label{eq:joint_gen}
    & \mathbb{P}_E \left( z_i^{(t+1)} \Big| x, Z^{(t)}, E_i^{(t)}, \phi_i \right) \notag \\
    \propto & \sum_{\theta \in \Theta} \bigg[ \mathbb{P} \left( z_i^{(t+1)} \Big| \theta, \phi_i \right) \mathbb{P} \left( x | \theta, \phi_i \right) \notag \\
    & \mathbb{P} \left( \theta \Big| E_i^{(t)}, \phi_i \right) \prod_{j=1}^{n} \mathbb{P} \left( z_j^{(t)} \Big| \theta, \phi_i \right)^{w_{j,i}^{(t)}} \bigg].
\end{align}

Compared with the vanilla decomposition (Eq.~\eqref{eq:estornell}), our proposed framework introduces two modifications. First, the concept prior \(\mathbb{P}(\theta | \phi_i)\) is replaced by a memory-corrected prior \(\mathbb{P}(\theta | E_i^{(t)}, \phi_i)\), where \(E_i^{(t)} \subseteq M_i\) denotes experiences retrieved from agent \(i\)'s memory bank \(M_i\) via a debate-state-aware retrieval policy (\cref{sec:retrieval}). Second, each peer's contribution to the skew term is modulated by a confidence weight \(w_{j,i}^{(t)}\), which is also estimated from agent \(i\)'s retrieved experiences (\cref{sec:conf_weight}). The two modifications act on distinct components of the decomposition, and the overall framework is illustrated in \cref{fig:framework}.

\subsection{Debate-State-Aware Memory Retrieval} \label{sec:retrieval}
Retrieving memories based solely on task similarity, as commonly adopted in memory-based LLM agents \citep{ling2025memad, tian2026multi}, cannot capture the dynamics of an ongoing debate or adapt to agents' evolving needs across rounds. For instance, when a majority of agents have agreed on the same answer, the consensus may reflect a shared misconception instead of correctness, and the most valuable memories are those in which a similar majority turned out to be wrong. Conversely, when agents remain divided, the priority shifts to retrieving memories that are both relevant and historically associated with the positive outcomes, helping agents identify the potential right direction. Hence, we first define agent \(i\)'s \emph{debate state} to characterize its current status, and then design a debate-state-aware retrieval policy that dynamically adjusts its strategy accordingly.

\paragraph{Debate State.}
Before generating a response at round \(t\) (\(t > 0\)), each agent \(i\) observes the current debate state, defined as the tuple
\begin{equation} \label{eq:debate_state}
    s_i^{(t)} := \langle x, z_i^{(t-1)}, h^{(t-1)}, \mathrm{Cons}(Z^{(t-1)}) \rangle,
\end{equation}
where \(x\) is the current task, \(z_i^{(t-1)}\) is agent \(i\)'s previous response, \(h^{(t-1)}\) is a natural language summary of the previous round's debate, and \(\mathrm{Cons}(Z^{(t-1)})\) is the consensus ratio, defined as the fraction of agents agreeing on the most popular answer:
\begin{equation} \label{eq:consensus_ratio}
    \mathrm{Cons}(Z) = \max_y \frac{1}{n} \sum_{i=1}^n \mathbf{1} \left[ a(z_i) = y \right].
\end{equation}

\paragraph{Experience Memory.}
After a debate concludes, each agent \(i\) will extract key information from each round \(t\) to construct memory cases \(e_{i}^{(t)}\) and store them in its memory bank \(M_i\). Specifically, the structure of round \(t\)'s memory case \(e_{i}^{(t)}\) is formally defined as:
\begin{equation} \label{eq:memory_case}
        e_{i}^{(t)} := \langle s_{i}^{(t)}, Z^{(t)}, y, \zeta_{i}^{(t)}, r_{i} \rangle.
\end{equation}
Here, \(Z^{(t)}\) is all agents' responses at round \(t\), \(y\) is the answer to task \(x\), \(\zeta_{i}^{(t)}\) indicates the correctness of agent \(i\)'s current response, and \(r_{i}\) is the outcome reward for agent \(i\) (\(1\) for correct, \(0\) for wrong). We note that \(r_i\) only requires a signal indicating whether a concluded debate reached a good answer, which any method that learns from debate outcomes necessarily needs. This signal is not tied to gold annotations: it can equally be provided by a verifier model or by execution feedback on verifiable tasks. In our experiments, it simply reuses the labels already in the benchmarks, requiring no annotation beyond the datasets themselves. Over time, the memory bank \(M_i\) accumulates a growing collection of debate experiences that can be drawn upon in future debates.

\paragraph{Retrieval Policy.}
Given the current debate state \(s_i^{(t)}\) and memory bank \(M_i\), the state-aware retrieval policy proceeds in two stages. Assuming to retrieve \(K\) cases from the memory, we first sample a candidate set \(\mathcal{C}_i^{(t)}\) by retrieving top-\(3K\) cases based on the cosine similarity between \(s_i^{(t)}\) and the debate state in memory cases. We then select \(K\) cases from \(\mathcal{C}_i^{(t)}\) via Maximal Marginal Relevance (MMR), starting from \(E_i^{(t)} = \emptyset\) and greedily selecting at each step:
\begin{align} \label{eq:mmr}
    e^\star =\ & {\arg\max}_{e \in \mathcal{C}_i^{(t)} \backslash E_i^{(t)}} \left[ \lambda^t \cdot \mathrm{sim}(e, s_i^{(t)}) \cdot r_i(e) \right. \notag \\
    & \left. - (1 - \lambda^t) \max_{e^\prime \in E_i^{(t)}} \mathrm{sim}(e, e^\prime) \right],
\end{align}
until the size of the retrieved memory set \(| E_i^{(t)} | = K\). Here, \(r_i(e)\) is the outcome reward for memory case \(e\), and the trade-off coefficient \(\lambda^t\) is a function of the consensus ratio:
\begin{equation}
    \lambda^t = 1 - \gamma\,\mathrm{Cons}(Z^{(t-1)}), \ \gamma \in [0, 1].
\end{equation}
The first term in Eq. \eqref{eq:mmr} combines relevance with outcome quality: the \(\mathrm{sim}(e, s_i^{(t)}) \cdot r_i(e)\) assigns high scores to experiences that are both semantically related and positive. The second term penalizes redundancy by discouraging experiences similar to those already selected. The consensus-dependent \(\lambda^t\) dynamically balances the two objectives: 1) when consensus is low, \(\lambda^t\) is large and retrieval favors relevant, positive experiences to help agents reach consensus; 2) when consensus is high, \(\lambda^t\) decreases, allowing diverse and contrastive experiences to enter the retrieved set.

From a theoretical perspective, the retrieved experiences act as a likelihood-ratio correction to the agent's concept prior, updating \(\mathbb{P}(\theta | \phi_i)\) to \(\mathbb{P}(\theta | E_i^{(t)}, \phi_i)\). We formalize this as follows (refer to Appendix~\ref{proof:prior-correction} for detailed proof):
\begin{tcolorbox}[left=1.5mm, right=1.5mm, top=1.5mm, bottom=1.5mm, breakable]
\begin{proposition}[Memory as prior correction]
\label{prop:prior-correction}
    The memory-corrected prior satisfies 
    \begin{equation}
        \mathbb{P}(\theta | E_i^{(t)}, \phi_i) = P(\theta | \phi_i) \cdot \frac{\mathbb{P}(E_i^{(t)} | \theta, \phi_i)}{\mathbb{P}(E_i^{(t)} | \phi_i)}.
    \end{equation}
    That is, the retrieved experiences \(E_i^{(t)}\) act as a likelihood-ratio correction to the agent's bare prior. In particular, if \(E_i^{(t)}\) provide more evidence for the true concept \(\theta^\star\) than for an erroneous concept \(\theta^\prime\), i.e., \(\mathbb{P}(E_i^{(t)} | \theta^\star, \phi_i) > \mathbb{P}(E_i^{(t)} | \theta^\prime, \phi_i) \), then
    \begin{equation}
        \frac{\mathbb{P}(\theta^\star | E_i^{(t)}, \phi_i)}{\mathbb{P}(\theta^\prime | E_i^{(t)}, \phi_i)} > \frac{\mathbb{P}(\theta^\star | \phi_i)}{\mathbb{P}(\theta^\prime | \phi_i)},
    \end{equation}
    directly counteracting the bias toward \(\theta^\prime\).
\end{proposition}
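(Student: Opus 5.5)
The plan is to apply Bayes' rule directly in the latent concept space, treating $\phi_i$ as fixed conditioning throughout. First I will write the joint $\mathbb{P}(\theta, E_i^{(t)} \mid \phi_i)$ in two ways:
\begin{equation*}
\mathbb{P}(\theta \mid E_i^{(t)}, \phi_i)\,\mathbb{P}(E_i^{(t)} \mid \phi_i) = \mathbb{P}(E_i^{(t)} \mid \theta, \phi_i)\,\mathbb{P}(\theta \mid \phi_i).
\end{equation*}
Dividing by $\mathbb{P}(E_i^{(t)} \mid \phi_i)$ then gives the first identity. The evidence term is itself $\sum_{\theta \in \Theta} \mathbb{P}(E_i^{(t)} \mid \theta, \phi_i)\,\mathbb{P}(\theta \mid \phi_i)$. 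Two implicit assumptions need stating:
\begin{itemize}
\item the evidence is strictly positive, so the division is legitimate;
\item $E_i^{(t)}$ is a well-defined random object under the agent's generative model given $(\theta, \phi_i)$.
\end{itemize}

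The point I expect to need care is the second assumption. The retrieval policy in Eq.~\eqref{eq:mmr} depends on the debate state $s_i^{(t)}$, which involves $x$ and $Z^{(t-1)}$. So strictly one should either condition on those quantities as well, or invoke the conditional independence assumption of Appendix~\ref{appendix:proof}, under which the likelihood of the retrieved set given $\theta$ and $\phi_i$ is the relevant factor. I will adopt the latter, in keeping with the paper's modelling stance, and note that conditioning additionally on the state leaves the algebra unchanged.

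For the comparative claim, I will write the identity for $\theta^\star$ and for $\theta^\prime$ and take the ratio. The normalizer $\mathbb{P}(E_i^{(t)} \mid \phi_i)$ cancels, giving
\begin{equation*}
\frac{\mathbb{P}(\theta^\star \mid E_i^{(t)}, \phi_i)}{\mathbb{P}(\theta^\prime \mid E_i^{(t)}, \phi_i)} = \frac{\mathbb{P}(\theta^\star \mid \phi_i)}{\mathbb{P}(\theta^\prime \mid \phi_i)} \cdot \frac{\mathbb{P}(E_i^{(t)} \mid \theta^\star, \phi_i)}{\mathbb{P}(E_i^{(t)} \mid \theta^\prime, \phi_i)}.
\end{equation*}
Under the hypothesis, the last factor is strictly greater than $1$. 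Provided the prior ratio is finite and positive, the strict inequality follows. This requires $\mathbb{P}(\theta^\prime \mid \phi_i) > 0$, which is natural since $\theta^\prime$ is the concept the agent is biased toward, and $\mathbb{P}(\theta^\star \mid \phi_i) > 0$, needed for strictness. I will state these positivity conditions explicitly as the only extra hypotheses.

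Finally, I will remark on the interpretation, which is that the posterior odds equal the prior odds times a Bayes factor. Whenever the retrieved experiences favor $\theta^\star$, the odds shift multiplicatively away from $\theta^\prime$, and this counteracts the bias in the concept prior term of Eq.~\eqref{eq:joint_gen}.
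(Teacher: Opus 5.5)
Your proposal is correct and follows the paper's proof: Bayes' rule on the joint of $(\theta, E_i^{(t)})$ given $\phi_i$ yields the identity, and taking the ratio at $\theta^\star$ and $\theta^\prime$ cancels the marginal likelihood $\mathbb{P}(E_i^{(t)} \mid \phi_i)$, leaving the prior odds times a Bayes factor that exceeds $1$. The paper leaves two things implicit that you state explicitly, and both are genuinely needed for the strict inequality to be meaningful: the positivity conditions $\mathbb{P}(\theta^\star \mid \phi_i) > 0$ and $\mathbb{P}(\theta^\prime \mid \phi_i) > 0$ (together with $\mathbb{P}(E_i^{(t)} \mid \theta^\prime, \phi_i) > 0$, so the factor is defined), and the conditional-independence caveat about the state-dependent retrieval.
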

\end{tcolorbox}

\subsection{Memory-Derived Confidence Weighting} \label{sec:conf_weight}
In a standard multi-agent debate, all agents' responses are presented equally to their peers, regardless of their reliability. However, an agent whose current stance has historically led to negative outcomes in similar circumstances should carry less influence, while an agent whose stance has been consistently positive deserves greater weight. Rather than requiring an external oracle to judge agent quality, we leverage the same retrieved experiences from Eq.~\eqref{eq:mmr} to estimate each agent's reliability and weight their influence accordingly.

\paragraph{Confidence Estimation.}
Given agent \(i\)'s retrieved experiences \(E_i^{(t)} = \{e_{i,1}, \dots, e_{i,K}\}\), since these experiences share similar debate states with agent \(i\) (according to the retrieval policy), we estimate the reliability of agent \(j\)'s current stance by comparing it against its historical stances recorded in each experience. Specifically, for each experience \(e_k\), we compute a soft matching weight based on the cosine similarity between agent \(j\)'s current response and its historical responses in the retrieved experiences, then aggregate these weights with its correctness to obtain a confidence score:
\begin{equation}
    c_{j,i}^{(t)} = \frac{\sum_{k=1}^K \mathrm{sim}(z_j^{(t)}, z_j(e_{i,k})) \cdot \zeta_j(e_{i,k})}{\sum_{k=1}^K \mathrm{sim}(z_j^{(t)}, z_j(e_{i,k}))},
\end{equation}
where \(z_j(e_{i,k})\) and \(\zeta_j(e_{i,k})\) denote agent \(j\)'s response and correctness in agent \(i\)'s experience \(e_{i,k}\). Intuitively, \(c_{j,i}^{(t)}\) reflects agent \(i\)'s estimate of agent \(j\)'s reliability: by examining how agent \(j\) performed in historically similar debate states, agent \(i\) assigns higher confidence when agent \(j\)'s past responses in similar situations were more often correct, and lower confidence otherwise.

\paragraph{Confidence Weighting.}
The confidence score determines how much influence agent \(j\) should exert in agent \(i\)'s subsequent generation. Formally, \(c_{j,i}^{(t)}\) is mapped to a confidence weight \(w_{j,i}^{(t)}\) via a sigmoid function such that \(c_{j,i} > 0.5\) yields \(w_{j,i} > 1\) (amplified influence), and \(c_{j,i} < 0.5\) yields \(w_{j,i} < 1\) (suppressed influence). Since the token probabilities of a black-box LLM cannot be edited directly, we realize \(w_{j,i}^{(t)}\) at the prompt level by annotating each agent's response based on two thresholds \(w_h\) and \(w_l\) (set to \(0.55\) and \(0.45\) by default). When \(c_{j,i}^{(t)} > w_h\), agent \(j\)'s response is marked by agent \(i\) as high confidence, signaling that this stance has been historically reliable in similar debate states. When \(c_{j,i}^{(t)} < w_l\), the response is marked as low confidence, indicating limited historical support. Otherwise, the historical evidence is considered inconclusive, and agent \(j\)'s response is presented without explicit confidence annotation.

\citet{estornell2024multi} show that when \(m\) agents produce responses aligned with a common erroneous concept \(\theta^\prime\), the debate posterior \(\mathbb{P}(z_i^{(t)} | x, Z^{(t)}, \phi_i)\) becomes dominated by \(\theta^\prime\) at a rate of \(O(m)\), a phenomenon termed \emph{majority dominance}. Our confidence weighting mechanism can mitigate this effect (proof in Appendix~\ref{proof:anti-majority}):
\begin{tcolorbox}[left=1.5mm, right=1.5mm, top=1.5mm, bottom=1.5mm, breakable]
\begin{proposition}[Anti-majority dominance]
\label{prop:anti-majority}
    Suppose \(m\) agents share a common misconception \(\theta^\prime\) and are assigned confidence weight \(\alpha \in (0, 1)\), while the remaining agents receive weight \(1\). The rate at which the debate posterior converges to \(\theta^\prime\) is slowed from \(O(m)\) to \(O(\alpha m)\).
\end{proposition}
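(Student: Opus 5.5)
We compare the weighted posterior mass on \(\theta^\prime\) with that on the true concept \(\theta^\star\) through the log-odds, using the weighted decomposition in Eq.~\eqref{eq:joint_gen}. For fixed agent \(i\), round \(t\), and concept \(\theta\), write the unnormalized weight of \(\theta\) as
\begin{equation*}
    \Pi(\theta) := \mathbb{P}(x|\theta,\phi_i)\,\mathbb{P}(\theta|E_i^{(t)},\phi_i)\prod_{j=1}^n \mathbb{P}(z_j^{(t)}|\theta,\phi_i)^{w_{j,i}^{(t)}} .
\end{equation*}
The generation probability is a \(\Pi\)-weighted mixture of the per-concept likelihoods \(\mathbb{P}(z_i^{(t+1)}|\theta,\phi_i)\). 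So it suffices to show that the posterior weight on \(\theta^\prime\) relative to \(\theta^\star\) grows like \(\exp(\Theta(\alpha m))\) instead of \(\exp(\Theta(m))\). Following \citet{estornell2024multi}, we take ``rate \(O(m)\)'' to mean the log-odds \(\log \Pi(\theta^\prime)/\Pi(\theta^\star)\) grows linearly in \(m\). We prove the statement in that sense.

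\textbf{Step 1 (isolate the majority).} Let \(S\) be the set of the \(m\) misconceived agents, with \(w_{j,i}^{(t)}=\alpha\) for \(j\in S\) and \(w_{j,i}^{(t)}=1\) otherwise. Taking logs of \(\Pi(\theta^\prime)/\Pi(\theta^\star)\) gives
\begin{equation*}
    \log\frac{\Pi(\theta^\prime)}{\Pi(\theta^\star)} = C_i + \alpha\sum_{j\in S}\log\frac{\mathbb{P}(z_j^{(t)}|\theta^\prime,\phi_i)}{\mathbb{P}(z_j^{(t)}|\theta^\star,\phi_i)} + \sum_{j\notin S}\log\frac{\mathbb{P}(z_j^{(t)}|\theta^\prime,\phi_i)}{\mathbb{P}(z_j^{(t)}|\theta^\star,\phi_i)} .
\end{equation*}
Here \(C_i\) collects the \(x\)-likelihood ratio and the prior ratio. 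Neither depends on \(m\). By Proposition~\ref{prop:prior-correction}, the prior ratio is no larger than the bare one whenever the retrieved memory favors \(\theta^\star\).

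\textbf{Step 2 (bound each term).} We adopt the standing assumption of \citet{estornell2024multi}: a response aligned with \(\theta^\prime\) has a log-likelihood ratio \(\log \mathbb{P}(z_j|\theta^\prime)/\mathbb{P}(z_j|\theta^\star)\) bounded in \([\delta, D]\) for constants \(0<\delta\le D\). The responses of the remaining \(n-m\) agents contribute a quantity independent of \(m\), or at worst one that works against \(\theta^\prime\). Hence the middle sum lies in \([\alpha m\delta, \alpha m D]\), and the log-odds equal \(\alpha m\cdot\Theta(1) + O(1)\). With \(\alpha=1\) this recovers the vanilla \(O(m)\) majority-dominance rate. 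With \(\alpha\in(0,1)\) it is \(O(\alpha m)\).

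\textbf{Step 3 (translate to the posterior).} Normalizing, the mass on \(\theta^\prime\) is \(1/(1+\sum_{\theta\neq\theta^\prime}\Pi(\theta)/\Pi(\theta^\prime))\). Its gap to \(1\) therefore decays like \(\exp(-\Theta(\alpha m))\) rather than \(\exp(-\Theta(m))\). This is the claimed slowdown of convergence of the debate posterior toward \(\theta^\prime\).

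The main obstacle is Step 2. We must state precisely which regularity assumption makes the per-agent log-likelihood ratios uniformly bounded, and why the weight-\(1\) agents do not reintroduce \(m\)-dependence. Our first approach is to assume those agents' likelihood ratios are bounded independently of \(m\). If the number of agents \(n\) grows with \(m\), we would instead treat their contribution separately as a term in \(n-m\) that does not favor \(\theta^\prime\).
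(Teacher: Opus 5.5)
Your proposal is correct in substance and uses the same mechanism as the paper. The weight \(\alpha\) multiplies the majority's log-likelihood contribution, so competing concepts are suppressed by a factor whose logarithm scales with \(\alpha m\) instead of \(m\).

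The paper formalizes this differently. Following Theorem~5.2 of \citet{estornell2024multi}, it proceeds as follows:
\begin{itemize}
\item it assumes \(\theta^\prime\) maximizes \(\mathbb{P}(z_j^{(t)}|\theta,\phi_i)\) for each majority response;
\item it approximates the weighted majority product by \(\mathbb{P}(z'^{(t)}|\theta,\phi_i)^{\alpha m}\) for a canonical representative \(z'\);
\item it divides by the \(\theta^\prime\) summand to obtain factors \(r_\theta^{\alpha m}\) with \(r_\theta<1\);
\item it lets \(m\to\infty\) to show the ratio of generation probabilities of two candidate responses tends to its value under \(\theta^\prime\);
\item it reads off the log-rate \(\alpha m\log r\) for the leading competitor.
\end{itemize}
It uses the bare prior \(\mathbb{P}(\theta|\phi_i)\), since the proposition concerns confidence weighting alone. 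Your memory-corrected prior only enters \(C_i\), so it is harmless but unnecessary.

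Your band \([\delta,D]\) on per-response log-likelihood ratios is an assumption you add, not one the paper makes. In exchange it removes the canonical-representative approximation, allows the majority responses to differ from one another, and gives finite-\(m\) bounds instead of only a limit. Tracking the posterior mass on \(\theta^\prime\) rather than the generation ratio is equivalent, since generation is a mixture under that posterior.

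Two points need tightening. First, Step~3 needs \(\Pi(\theta)/\Pi(\theta^\prime)\le e^{-\Omega(\alpha m)}\) for \emph{every} \(\theta\neq\theta^\prime\), but Step~2 only controls the log-odds against \(\theta^\star\). You need \(\log\frac{\mathbb{P}(z_j^{(t)}|\theta^\prime,\phi_i)}{\mathbb{P}(z_j^{(t)}|\theta,\phi_i)}\ge\delta\) for all \(\theta\neq\theta^\prime\), which is the quantitative form of the paper's \(r_\theta<1\). You also need a finite concept space \(\Theta\), so that the normalizer's sum over competitors is still \(e^{-\Omega(\alpha m)}\). The paper tacitly uses the same finiteness when it passes from each summand vanishing to only the \(\theta^\prime\) summand surviving.

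Second, do not read the slowdown itself off the asymptotic bounds. For fixed \(\alpha\), \(\Theta(\alpha m)=\Theta(m)\), and the bands \([\alpha m\delta,\alpha m D]\) and \([m\delta,mD]\) overlap whenever \(\alpha D\ge\delta\). The precise comparison is response-by-response: for the same responses, your Step~1 identity shows the majority term is exactly \(\alpha\) times its vanilla value. That is what the paper's \(r^{\alpha m}\) versus \(r^m\) (equivalently \(m_{\mathrm{eff}}=m/\alpha\)) expresses.

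Finally, hold the minority contribution fixed in \(m\). The paper assumes this by leaving \(\prod_{j>m}\mathbb{P}(z_j^{(t)}|\theta,\phi_i)\) as an \(m\)-independent factor. With these fixes your argument is complete.
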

\end{tcolorbox}

We further show that the memory lift and confidence lift are additive in log-odds space, providing a joint guarantee that, under the stated conditions, combining both mechanisms is at least as effective as using either alone. The formal statement and proof are provided as Theorem~\ref{thm:joint} in Appendix~\ref{proof:joint}.

\begin{table*}[t]
\centering
\begin{tabular}{l | c c c c | c}
    \toprule
    \textbf{Methods} & \textbf{MATH500} & \textbf{Economics} & \textbf{Engineering} & \textbf{TruthfulQA} & \textbf{Average} \\
    \midrule
    \multicolumn{6}{c}{\textit{Qwen2.5-7B-Instruct}} \\
    \midrule
    CoT & \textbf{0.530} & 0.649 & 0.407 & 0.675 & 0.565 \\
    Self-Consistency & \underline{0.522} & \underline{0.668} & \underline{0.477} & \underline{0.704} & \underline{0.593} \\
    MAD & 0.515 & 0.645 & 0.444 & 0.681 & 0.571 \\
    \madmm{} & 0.403 & 0.635 & 0.374 & 0.590 & 0.501 \\
    \rowcolor{cyan!15} \ourmethod{} (ours) & \underline{0.522} & \textbf{0.701} & \textbf{0.481} & \textbf{0.723} & \textbf{0.607} \\
    \midrule
    \multicolumn{6}{c}{\textit{Qwen3-8B}} \\
    \midrule
    CoT & 0.627 & 0.787 & 0.465 & 0.807 & 0.672 \\
    Self-Consistency & 0.671 & \underline{0.791} & 0.535 & \underline{0.825} & 0.706 \\
    MAD & \underline{0.821} & 0.787 & 0.634 & \textbf{0.843} & \underline{0.771} \\
    \madmm{} & 0.769 & 0.787 & \textbf{0.667} & 0.705 & 0.732 \\
    \rowcolor{cyan!15} \ourmethod{} (ours) & \textbf{0.843} & \textbf{0.796} & \underline{0.638} & \textbf{0.843} & \textbf{0.780} \\
    \midrule
    \multicolumn{6}{c}{\textit{Gemma-3-4B-IT}} \\
    \midrule
    CoT & 0.500 & 0.446 & 0.198 & 0.627 & 0.443 \\
    Self-Consistency & \underline{0.530} & 0.488 & 0.255 & 0.632 & 0.476 \\
    MAD & 0.500 & 0.502 & 0.243 & \underline{0.692} & \underline{0.484} \\
    \madmm{} & \textbf{0.537} & \underline{0.507} & \underline{0.259} & 0.448 & 0.448 \\
    \rowcolor{cyan!15} \ourmethod{} (ours) & \underline{0.530} & \textbf{0.521} & \textbf{0.263} & \textbf{0.705} & \textbf{0.505} \\
    \bottomrule
\end{tabular}
\vspace{-0.1cm}
\caption{Overall accuracy of different methods across four benchmarks using three large language models. \textbf{Bold} and \underline{underlined} values indicate the highest and second-highest accuracies in each model's column.}
\label{tab:main_results}
\vspace{-0.4cm}
\end{table*}

\section{Experiments}
\label{sec:exp}

\subsection{Experiment Setups}
\label{sec:exp_setup}
\paragraph{Benchmarks.}
To comprehensively evaluate the performance of \ourmethod{}, we conduct experiments on four benchmarks that cover both reasoning and knowledge-intensive tasks: MATH500 \citep{lightman2024let} for mathematical reasoning, Economics and Engineering subsets from MMLU-Pro \citep{wang2024mmlu} for domain-specific knowledge reasoning, and TruthfulQA \citep{lin2022truthfulqa} for factual judgment. This combination allows us to assess whether \ourmethod{} generalizes across tasks with different characteristics and patterns.

\paragraph{Baselines.}
We compare our methods against the following baselines: (1) \textbf{Chain of Thought (CoT)} \citep{wei2022chain}, where a single agent performs chain-of-thought reasoning; (2) \textbf{Self-Consistency} \citep{wang2022self}, which samples multiple reasoning paths from a single agent and selects the answer by majority voting; (3) \textbf{MAD} \citep{du2024improving}, the standard multi-agent debate framework; and (4) \textbf{\madmm{}} \citep{tian2026multi}, which augments MAD with memory masking to filter unreliable information from previous rounds. To further isolate whether our gains simply follow from access to related past experiences, we additionally compare against (5) \textbf{ICL-CoT}, a single-agent baseline that retrieves from the same memory bank as \ourmethod{} and uses the retrieved experiences as in-context examples; results are reported in Appendix~\ref{appendix:icl_cot}.

\paragraph{Implementation Details.}
We mainly conduct experiments with three open-source LLMs: Qwen2.5-7B-Instruct \citep{yang2024qwen2}, Qwen3-8B \citep{yang2025qwen3}, and Gemma-3-4B-IT \citep{gemma2025gemma3}. To further test whether the benefit persists on larger models, we also evaluate it on Llama-3.3-70B-Instruct \citep{grattafiori2024llama} and GPT-4o-mini \citep{openai2024gpt4}. For all debate-based methods, we use 3 agents over 3 rounds of debate. To prevent test data from leaking into agents' memory, we split each dataset into a training set for memory construction and a test set for evaluation. For MATH500, we use Level-5 (hardest) problems as the test set and the remaining as the training set. For other datasets, we randomly sampled a portion of the data as the test set. To ensure that no method benefits from prompt-level differences, the system, task, and debate prompts listed in Appendix~\ref{appendix:prompts} are shared identically across all methods. Further details are available in Appendix~\ref{appendix:impl_details}.

\subsection{Results and Analysis}
\label{sec:exp_results}
We organize our experimental analysis around four research questions: \textbf{RQ1} examines the overall performance of \ourmethod{} against existing baselines; \textbf{RQ2} validates the individual contribution of each proposed component through ablation; \textbf{RQ3} isolates the contribution of the debate-state-aware retrieval policy; and \textbf{RQ4} investigates whether \ourmethod{} is effective under the shared misconception regime, which is the core motivation of this work.

\paragraph{RQ1: Does \ourmethod{} improve over existing single-agent and MAD baselines?}
To answer this question, we conduct experiments across four benchmarks and three models. The results are presented in \cref{tab:main_results}. \ourmethod{} achieves the highest average accuracy on all three models, consistently outperforming both single-agent methods and debate baselines, with the largest improvements on the knowledge-intensive benchmarks such as Economics and TruthfulQA and somewhat limited on MATH500, where the answer hinges on a long derivation rather than on domain knowledge. Which baseline comes closest varies by model: MAD is the runner-up on Qwen3-8B and Gemma-3-4B-IT, whereas on Qwen2.5-7B-Instruct Self-Consistency becomes the strongest baseline once its sampling budget is matched to that of the debate methods. We also note that \madmm{} performs inconsistently, falling below standard MAD on several benchmarks (e.g., MATH500 and TruthfulQA on Qwen2.5-7B-Instruct), suggesting that its memory masking strategy can sometimes discard useful information. In contrast, \ourmethod{}'s selective retrieval and confidence weighting provide more robust improvements. Beyond these three open-source models, \ourmethod{} also attains the best average accuracy against CoT and MAD on Llama-3.3-70B-Instruct and GPT-4o-mini, indicating that the benefit is not confined to the small-model regime. The full results are in Appendix~\ref{appendix:large_models}.

\begin{table*}[t]
\centering
\begin{tabular}{l | c c c c | c}
    \toprule
    \textbf{Methods} & \textbf{MATH500} & \textbf{Economics} & \textbf{Engineering} & \textbf{TruthfulQA} & \textbf{Average} \\
    \midrule
    \multicolumn{6}{c}{\textit{Qwen2.5-7B-Instruct}} \\
    \midrule
    \rowcolor{cyan!15} \ourmethod{}  & \textbf{0.522} & \textbf{0.701} & \textbf{0.481} & \textbf{0.723} & \textbf{0.607} \\
    \quad - w/o Confidence  & 0.493 & 0.664 & 0.465 & 0.717 & 0.585 \\
    \quad - w/o Memory  & 0.493 & 0.664 & 0.453 & 0.693 & 0.576 \\
    \midrule
    \multicolumn{6}{c}{\textit{Qwen3-8B}} \\
    \midrule
    \rowcolor{cyan!15} \ourmethod{}  & \textbf{0.843} & \textbf{0.796} & 0.638 & \textbf{0.843} & 0.780 \\
    \quad - w/o Confidence  & 0.821 & 0.782 & 0.638 & 0.831 & 0.768 \\
    \quad - w/o Memory  & 0.836 & \textbf{0.796} & \textbf{0.654} & \textbf{0.843} & \textbf{0.782} \\
    \midrule
    \multicolumn{6}{c}{\textit{Gemma-3-4B-IT}} \\
    \midrule
    \rowcolor{cyan!15} \ourmethod{}  & 0.530 & \textbf{0.521} & \textbf{0.263} & \textbf{0.705} & \textbf{0.505} \\
    \quad - w/o Confidence  & 0.522 & 0.474 & \textbf{0.263} & 0.699 & 0.479 \\
    \quad - w/o Memory  & \textbf{0.537} & 0.502 & 0.239 & 0.657 & 0.484 \\
    \bottomrule
\end{tabular}
\vspace{-0.1cm}
\caption{Ablation study on \ourmethod{}. "w/o Confidence" removes confidence weights and annotations from peer responses; "w/o Memory" removes retrieved experiences from the task prompt.}
\label{tab:ablation}
\vspace{-0.4cm}
\end{table*}

\paragraph{RQ2: Do both components contribute to the improvement?}
To evaluate the individual contributions of the two core components, we conduct an ablation study by removing each module separately: (1) \textbf{\ourmethod{} w/o Confidence}, which uses memory retrieval but do not assign confidence weights to all agents' responses; and (2) \textbf{\ourmethod{} w/o Memory}, which disables adding retrieved experiences into the task prompt and relies solely on confidence weighting. Results are shown in \cref{tab:ablation}. On Qwen2.5-7B-Instruct, the full \ourmethod{} achieves an average accuracy of 0.607, while removing confidence weighting (w/o Confidence) drops performance to 0.585, and removing memory retrieval (w/o Memory) further drops it to 0.576. Both components contribute positively, and the full model consistently outperforms either ablated variant, confirming their complementary nature. Looking across models, confidence weighting shows a more consistent effect: on Gemma-3-4B-IT, removing confidence causes a notable drop in Economics and TruthfulQA, while removing memory leads to a larger drop in TruthfulQA. On Qwen3-8B, where the base MAD performance is already strong, the individual contributions are smaller, but the full model still achieves better performance. These results suggest that both mechanisms provide distinct benefits and that their relative importance varies with the model and task characteristics.

\begin{table}[t]
\centering
\small
\setlength{\tabcolsep}{3pt}
\vspace{0.1cm}
\begin{tabular}{l | c c | c c | c}
    \toprule
    \multirow{2}{*}{\textbf{Policy}} & \multicolumn{2}{c|}{\textit{Qwen2.5-7B}} & \multicolumn{2}{c|}{\textit{Gemma-3-4B}} & \multirow{2}{*}{\textbf{Avg.}} \\
    & \textbf{Econ.} & \textbf{TQA} & \textbf{Econ.} & \textbf{TQA} & \\
    \midrule
    Random                  & 0.668 & 0.705 & 0.496 & 0.671 & 0.635 \\
    Similarity-based        & 0.664 & 0.717 & 0.521 & 0.685 & 0.647 \\
    Positive-only           & 0.686 & \textbf{0.723} & 0.507 & \textbf{0.705} & 0.655 \\
    Diversity-only          & 0.682 & 0.717 & 0.512 & 0.685 & 0.649 \\
    Fixed-\(\lambda\) (0.25) & 0.676 & \textbf{0.723} & 0.523 & 0.695 & 0.654 \\
    Fixed-\(\lambda\) (0.50) & 0.686 & 0.711 & 0.504 & 0.697 & 0.650 \\
    Fixed-\(\lambda\) (0.75) & 0.684 & \textbf{0.723} & \textbf{0.528} & 0.691 & 0.657 \\
    \rowcolor{cyan!15} Debate-State-Aware & \textbf{0.701} & \textbf{0.723} & 0.521 & \textbf{0.705} & \textbf{0.663} \\
    \bottomrule
\end{tabular}
\vspace{-0.1cm}
\caption{Ablation over retrieval policies. \textbf{Bold} indicates the best result in each column.}
\label{tab:retrieval_ablation}
\vspace{-0.6cm}
\end{table}

\paragraph{RQ3: Does the debate-state-aware policy drive the gain?}
RQ2 establishes that memory helps, but not that our proposed way of retrieving it is responsible. To isolate the retrieval design, we hold every other component fixed and replace only the policy in Eq.~\eqref{eq:mmr} with five alternatives: \emph{random} selection from the memory bank, \emph{similarity-based} selection on task similarity alone, \emph{positive-only} selection restricted to cases with positive rewards, \emph{diversity-only} selection that drops the relevance term, and \emph{fixed-\(\lambda\)} variants that keep the MMR trade-off constant instead of conditioning it on the consensus ratio. \cref{tab:retrieval_ablation} reports the comparison on two models and two benchmarks.

Our policy attains the highest average accuracy (0.663), ahead of the best fixed-\(\lambda\) setting (0.657) and clearly ahead of random (0.635) and similarity-based (0.647) retrieval. That random retrieval is the weakest confirms that the gain does not follow from merely inserting past cases into the prompt, while similarity-based retrieval, the standard choice in memory-augmented agents, also trails every state-aware variant. Our policy is the best or tied-best in three of the four individual settings, demonstrating its effectiveness across different models and tasks. Taken together, these comparisons indicate that the improvement comes from how memory is selected rather than from its mere availability, and that letting the retrieval trade-off respond to the debate state is what makes the memory bank useful.

\begin{figure}[ht]
    \centering
    \begin{subfigure}{\linewidth}
        \centering
        \includegraphics[width=\linewidth]{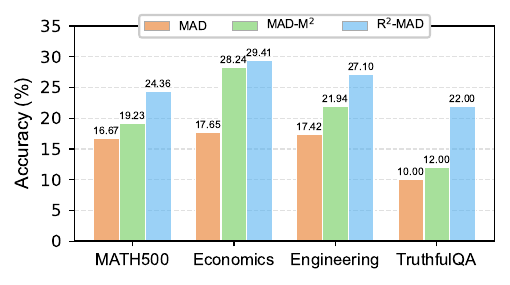}
        \vspace{-0.65cm}
        \caption{Qwen2.5-7B-Instruct}
        \label{fig:qwen2.5_eval}
    \end{subfigure}
    \vspace{-0.1cm}
    \begin{subfigure}{\linewidth}
        \centering
        \includegraphics[width=\linewidth]{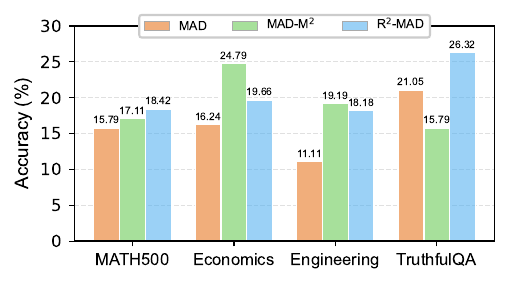}
        \vspace{-0.65cm}
        \caption{Gemma-3-4B-IT}
        \label{fig:gemma3_eval}
    \end{subfigure}
    \caption{Final accuracy of different debate methods with different LLMs on the shared misconception subset, where a majority of agents in standard MAD produced incorrect answers at round \(0\).}
    \label{fig:combined_eval}
    \vspace{-0.6cm}
\end{figure}

\paragraph{RQ4: Is \ourmethod{} effective under shared misconception?}
The overall results in RQ1 demonstrate the general effectiveness of \ourmethod{}, but do not reveal whether the gains come specifically from mitigating shared misconceptions, as our framework is designed to address. To directly evaluate this, we isolate the subset of instances where a majority of agents produced incorrect answers at round \(0\), which represents exactly the shared misconception regime. This is the most challenging setting for any debate-based method, as the peer skew actively works against the correct answer from the beginning. \cref{fig:combined_eval} presents the results on this subset. On Qwen2.5-7B-Instruct, \ourmethod{} substantially outperforms both MAD and \madmm{} across all four benchmarks, with the largest margins on Economics (29.41\% versus 17.65\% for MAD) and Engineering (27.10\% versus 17.42\%). A similar pattern holds on Gemma-3-4B-IT, where \ourmethod{} leads on three out of four benchmarks, with the most notable improvement on TruthfulQA. Importantly, the relative improvements on this subset are considerably larger than those observed in the overall results (\cref{tab:main_results}), confirming that the gains are concentrated in the regime \ourmethod{} targets.

\begin{table}[t]
\centering
\small
\setlength{\tabcolsep}{4pt}
\vspace{0.1cm}
\begin{tabular}{l l | c c}
    \toprule
    \textbf{Benchmark} & \textbf{Method} & \textbf{C}\(\rightarrow\)\textbf{W} \(\downarrow\) & \textbf{W}\(\rightarrow\)\textbf{C} \(\uparrow\) \\
    \midrule
    \multirow{2}{*}{MATH500} & \ourmethod{} & \textbf{0.222} & \textbf{0.127} \\
    & \quad - w/o Conf & 0.294 & 0.095 \\
    \midrule
    \multirow{2}{*}{Economics} & \ourmethod{} & 0.157 & \textbf{0.172} \\
    & \quad - w/o Conf & \textbf{0.148} & 0.075 \\
    \midrule
    \multirow{2}{*}{Engineering} & \ourmethod{} & \textbf{0.255} & 0.164 \\
    & \quad - w/o Conf & 0.410 & \textbf{0.173} \\
    \midrule
    \multirow{2}{*}{TruthfulQA} & \ourmethod{} & \textbf{0.208} & \textbf{0.103} \\
    & \quad - w/o Conf & 0.269 & 0.089 \\
    \bottomrule
\end{tabular}
\vspace{-0.1cm}
\caption{Stance-transition rates on the shared misconception subset with Qwen2.5-7B-Instruct. C \(\rightarrow\) W denotes correct agents switching to a wrong answer (lower is better) and W \(\rightarrow\) C the reverse (higher is better).}
\label{tab:flip_recovery}
\vspace{-0.6cm}
\end{table}

However, final accuracy alone does not show how the debate arrives at its answer, which is what \cref{prop:anti-majority} concerns: down-weighting an erroneous majority should make correct agents less likely to abandon their stance. To examine this, we further measure two stance-transition rates on the same subset: the fraction of agent-round transitions in which a correct agent switches to a wrong answer (C \(\rightarrow\) W) and the fraction in which a wrong agent recovers the correct one (W \(\rightarrow\) C), and compare \ourmethod{} against its w/o Confidence variant to demonstrate the effectiveness of confidence weighting in mitigating the \emph{majority dominance}. As illustrated in \cref{tab:flip_recovery}, confidence weighting successfully reduces harmful flips on three benchmarks, most notably on Engineering (0.410 to 0.255), and increases recoveries on three of the four, with Economics more than doubling (0.075 to 0.172). Corresponding results for the other two models are given in Appendix~\ref{appendix:flip_extra}.

\section{Conclusion}
In this paper, we propose \textbf{\ourmethod{}}, a framework that addresses the shared misconception problem in multi-agent debate by leveraging experience memory from past debates. Guided by the observation that shared misconceptions stem from two compounding factors in the debate process, a biased concept prior and an amplifying peer skew, \ourmethod{} introduces two complementary mechanisms: a debate-state-aware retrieval policy that calibrates agents' prior beliefs by dynamically selecting relevant historical evidence, and a memory-derived confidence weighting scheme that modulates peer influence based on estimated agent reliability. Experiments across various benchmarks and models demonstrate that \ourmethod{} consistently improves over both single-agent methods and MAD baselines, with particularly notable gains under the shared misconception regime. Ablation studies further confirm that both components contribute distinct benefits and that their combination yields the best overall performance.

\section*{Limitations}
The main limitation of \ourmethod{} is that it introduces additional computation on top of standard debate. Each round requires summarizing the current debate state, retrieving experiences, and estimating a confidence score for every peer. This overhead is a bounded constant factor rather than one that grows with task difficulty, yet it is a real cost, and \ourmethod{} is correspondingly less suitable than plain debate for latency-critical deployments.

The framework depends on the accumulated memory being informative for the tasks it is applied to. Building this memory requires an outcome signal for each past debate, which restricts our method to settings where such a signal is available. Currently, the memory is constructed offline and remains fixed at test time, which means its usefulness may degrade when the tasks encountered at deployment diverge substantially from those the memory was accumulated on. This dependence also carries a risk: since retrieved experiences shape agents' priors, a memory bank built from systematically mistaken trajectories could reinforce an erroneous consensus rather than correct it. So we recommend auditing the memory bank before applying the framework in consequential settings. And extending the framework to update memory continually during deployment, and to weaker forms of outcome supervision, is left to future work.

\section*{Acknowledgments}
Haifeng Zhang was supported in part by the National Natural Science Foundation of China under the Original Exploration Program (Grant No. 72450002).

\bibliography{references}


\appendix

\section{Additional Experimental Details}
\label{appendix:exp}

\subsection{Benchmark Details}
To comprehensively evaluate the performance of \ourmethod{}, we conduct experiments on four benchmarks that cover both reasoning and knowledge-intensive tasks, which are detailed as follows:
\begin{itemize}[leftmargin=12pt, topsep=5pt, itemsep=0pt]
    \item \textbf{MATH500}: This dataset is a subset of the MATH benchmark, consisting of 500 mathematical reasoning problems categorized by topic and difficulty \citep{lightman2024let}. To ensure a rigorously challenging evaluation, we selected 134 problems with the highest difficulty level (Level 5) from the MATH500 dataset as the test set, while adopting the remaining as the training set for memory construction.
    
    \item \textbf{MMLU-Pro Economics}: MMLU-Pro is an advanced benchmark assessing multi-disciplinary language understanding and reasoning across 14 domains \citep{wang2024mmlu}.
    In this work, we leveraged the Economics subset, which consists of 844 questions with an expanded 10-option multiple-choice format. During the experiment, the dataset was randomly divided into training and test sets in a 3:1 ratio. Consequently, 633 questions were allocated for experience accumulation and 211 for evaluation.
    
    \item \textbf{MMLU-Pro Engineering}: In order to measure the effectiveness of our method in more domains, we also employed the Engineering subset of MMLU-Pro \citep{wang2024mmlu}. This subset contains 969 challenging questions with a 10-option multiple-choice format. Following the same 3:1 split strategy, we randomly partitioned the dataset, resulting in 727 questions for the training set and 242 questions for the test set.
    
    \item \textbf{TruthfulQA}: This is a benchmark designed to evaluate whether language models generate truthful answers to questions that are adversarially crafted to elicit common misconceptions and falsehoods \citep{lin2022truthfulqa}. To facilitate efficient and automated correctness verification, we adopted the multiple-choice evaluation format provided by the dataset, in which the model selects the correct answer from a set of candidate options. However, TruthfulQA's multiple-choice variant features a variable number of options per question. To maintain a sufficiently challenging evaluation setting, we filtered for questions with 4 to 9 candidate options, yielding a total of 664 questions. Following the same 3:1 random split strategy, 498 questions were allocated to the training set and 166 to the test set.
\end{itemize}

\begin{table*}[!ht]
\centering
\begin{tabular}{l | c c c c | c}
    \toprule
    \textbf{Methods} & \textbf{MATH500} & \textbf{Economics} & \textbf{Engineering} & \textbf{TruthfulQA} & \textbf{Average} \\
    \midrule
    \multicolumn{6}{c}{\textit{Qwen2.5-7B-Instruct}} \\
    \midrule
    \madmm{} (S)     & 0.410 & 0.597 & 0.346 & 0.560 & 0.478 \\
    \madmm{} (O)     & 0.403 & 0.635 & 0.374 & 0.590 & 0.501 \\
    \midrule
    \multicolumn{6}{c}{\textit{Qwen3-8B}} \\
    \midrule
    \madmm{} (S)    & 0.836 & 0.796 & 0.700 & 0.711 & 0.761 \\
    \madmm{} (O)    & 0.769 & 0.787 & 0.667 & 0.705 & 0.732 \\
    \midrule
    \multicolumn{6}{c}{\textit{Gemma-3-4B-IT}} \\
    \midrule
    \madmm{} (S)    & 0.448 & 0.550 & 0.247 & 0.428 & 0.418 \\
    \madmm{} (O)    & 0.537 & 0.507 & 0.259 & 0.448 & 0.448 \\
    \bottomrule
\end{tabular}
\vspace{-0.1cm}
\caption{Comparison of \madmm{} under subjective (S) and objective (O) masking strategies across four benchmarks and three large language models.}
\label{tab:mad-m2}
\vspace{-0.4cm}
\end{table*}

\subsection{Baseline Details}
To validate the effectiveness of our proposed \ourmethod{}, we adopted the following baselines:
\begin{itemize}[leftmargin=12pt, topsep=5pt, itemsep=0pt]
    \item \textbf{Chain of Thought (CoT)}: Chain of Thought prompting enhances the reasoning capability of LLMs by instructing the model to decompose a complex problem into intermediate reasoning steps before arriving at the final answer \citep{wei2022chain}. In our experiments, each agent generates a single response using CoT prompting, and the answer is directly extracted from this response.

    \item \textbf{Self-Consistency}: Self-Consistency improves upon CoT by sampling multiple independent reasoning paths for a given question and aggregating final answers by marginalizing out the reasoning paths \citep{wang2022self}. In our settings, the final answer is selected through majority voting over the sampled responses. And to ensure a fair comparison with multi-agent debate methods, which employ 3 agents over 3 debate rounds, we sampled 9 independent reasoning paths for Self-Consistency. Appendix~\ref{appendix:sc_paths} reports how the number of reasoning paths affects its final accuracy.

    \item \textbf{Multi-Agent Debate (MAD)}: Multi-Agent Debate employs multiple LLM agents to iteratively refine their responses through multi-round discussion \citep{du2024improving}. At the initial round, each agent independently generates a response. In subsequent rounds, each agent observes all peers' responses from the previous round and updates its answer accordingly. The final answer is also determined by majority voting over the last-round responses. Following the default configuration, we set the number of agents to 3 and the number of debate rounds to 3.

    \item \textbf{\madmm{}}: Multi-Agent Debate with Memory Masking addresses the vulnerability of the standard MAD framework to erroneous memories by introducing an evaluation-and-masking phase between debate rounds \citep{tian2026multi}. Before each subsequent round, agents critically evaluate the responses from the previous round and generate a binary mask to filter out potentially incorrect memories. \madmm{} supports both a subjective masking, where agents explicitly judge each response, and an objective masking strategy based on response perplexity. In our main experiments, we adopt the objective masking variant with the default configuration of 3 agents and 3 debate rounds.
    
    \item \textbf{ICL-CoT}: To test whether the benefit of \ourmethod{} can be attributed simply to having access to related past cases, we construct a retrieval-augmented single-agent baseline. For each test question, it retrieves cases from exactly the same memory bank that \ourmethod{} uses and places them in the prompt as in-context examples before performing standard CoT reasoning. Since a single agent has no debate state, retrieval here is keyed on task similarity. This baseline therefore receives the same memory content as \ourmethod{} but none of its debate-state-aware retrieval or confidence weighting.
\end{itemize}

\begin{table*}[t]
\centering
\begin{tabular}{l | c c c c | c}
    \toprule
    \textbf{Methods} & \textbf{MATH500} & \textbf{Economics} & \textbf{Engineering} & \textbf{TruthfulQA} & \textbf{Average} \\
    \midrule
    \multicolumn{6}{c}{\textit{Qwen2.5-7B-Instruct}} \\
    \midrule
    Self-Consistency (3)   & \textbf{0.530} & 0.644 & 0.457 & 0.686 & 0.579 \\
    Self-Consistency (6)   & 0.507 & 0.649 & 0.469 & \textbf{0.704} & 0.582 \\
    Self-Consistency (9)   & 0.522  & \textbf{0.668} & \textbf{0.477} & \textbf{0.704} & \textbf{0.593} \\
    \midrule
    \multicolumn{6}{c}{\textit{Qwen3-8B}} \\
    \midrule
    Self-Consistency (3)   & 0.649 & 0.768 & 0.490 & 0.819 & 0.682 \\
    Self-Consistency (6)   & 0.649 & 0.787 & 0.490 & \textbf{0.831} & 0.689 \\
    Self-Consistency (9)   & \textbf{0.671} & \textbf{0.791}  & \textbf{0.535} & 0.825 & \textbf{0.706} \\
    \midrule
    \multicolumn{6}{c}{\textit{Gemma-3-4B-IT}} \\
    \midrule
    Self-Consistency (3)   & 0.485 & \textbf{0.498} & 0.222 & \textbf{0.657} & 0.466 \\
    Self-Consistency (6)   & 0.522 & 0.479 & \textbf{0.259} & 0.608  & 0.467 \\
    Self-Consistency (9)   & \textbf{0.530} & 0.488  & 0.255 & 0.632 & \textbf{0.476} \\
    \bottomrule
\end{tabular}
\vspace{-0.1cm}
\caption{Accuracy of self-consistency with different numbers of paths across four benchmarks and three models. \textbf{Bold} values indicate the best result in each model's column.}
\label{tab:sc_ablation}
\vspace{-0.4cm}
\end{table*}

\subsection{Implementation Details}
\label{appendix:impl_details}

\paragraph{Model Configuration.}
Experiments are mainly conducted using three open-source LLMs: Qwen2.5-7B-Instruct \citep{yang2024qwen2}, Qwen3-8B \citep{yang2025qwen3}, and Gemma-3-4B-IT \citep{gemma2025gemma3}. We access all models via local inference with vLLM \citep{kwon2023efficient}. To encourage diverse responses, the generation temperature is set to \(1.0\) for all models. Moreover, the \texttt{top\_p} parameter is set to \(1.0\) across all models. And the maximum output token length for each response is set to \(6144\) for all models during both the training and evaluation stages. Other LLM-related hyperparameters are set to their default values. For the two larger models, Llama-3.3-70B-Instruct \citep{grattafiori2024llama} and GPT-4o-mini \citep{openai2024gpt4}, we use API access instead of local inference, keeping the same 3-agent, 3-round configuration, the same agent personas and prompts, and the same decoding settings where the API exposes them.

\paragraph{Memory Construction.}
The experience memory is constructed offline before evaluation by running the full debate procedure on the training set of each benchmark. Specifically, for each question, we run a standard 3-agent, 3-round debate using the agent personas and prompts described in Appendix~\ref{appendix:prompts}. To enrich the diversity of collected experiences, we force all agents to continue debating until the maximum number of rounds is reached, regardless of whether consensus has been established. This ensures the memory bank captures a wider range of debate dynamics, including scenarios where agents must break out of a premature consensus. Since the initial round (\(t = 0\)) involves each agent answering independently without access to peer responses or debate state information, we exclude round-\(0\) responses from the memory. For each subsequent round (\(t > 0\)), we extract a memory case following the structure defined in Eq.\eqref{eq:memory_case} and store it in the corresponding agent's memory bank. In our settings, each agent maintains its own memory bank, reflecting debate trajectories and outcomes from its own perspective. The resulting memory bank for each agent contains approximately \(2N_\text{train}\) cases per benchmark, where \(N_\text{train}\) is the number of questions in the training set of the benchmark.

\paragraph{Debate and Retrieval Procedure.}
At test time, each debate runs for 3 rounds with 3 agents. At each round \(t > 0\), the retrieval policy first encodes the current debate state into an embedding vector using BGE-M3 \citep{chen2024m3}. A candidate set of \(3K\) memory cases is then retrieved by ranking all entries in the agent's memory bank by cosine similarity to this embedding and retaining the top-\(3K\). This initial filtering serves two purposes: it reduces the computational cost of the subsequent MMR selection, and it establishes a lower bound ensuring that all final candidates maintain a minimum level of relevance to the current debate state. From this candidate set, \(K\) cases are selected via MMR with the consensus-based trade-off coefficient \(\lambda_t = 1 - \gamma \cdot \text{Cons}(Z^{(t-1)})\), where \(\gamma = 0.9\). For confidence weighting, the thresholds are set to \(w_h = 0.55\) and \(w_l = 0.45\) in practice. And when \(c_{j,i}^{(t)} < w_l\) or \(c_{j,i}^{(t)} > w_h\), agent \(j\)'s response will be appended with an annotation enclosed by \texttt{<confidence></confidence>}; otherwise, no annotation is added. The consensus ratio is computed by extracting answers from each agent's response and calculating the fraction of agents agreeing on the most frequent answer. During evaluation, early stopping is applied if all agents reach consensus before round \(T\). For the final answer, majority voting is performed over the responses of all agents.

\begin{table*}[t]
\centering
\setlength{\tabcolsep}{6pt}
\begin{tabular}{l | c c c c | c}
    \toprule
    \textbf{Methods} & \textbf{MATH500} & \textbf{Economics} & \textbf{Engineering} & \textbf{TruthfulQA} & \textbf{Average} \\
    \midrule
    \multicolumn{6}{c}{\textit{Llama-3.3-70B-Instruct}} \\
    \midrule
    CoT          & 0.485 & 0.754 & 0.539 & 0.799 & 0.644 \\
    MAD          & 0.542 & 0.750 & 0.642 & \textbf{0.874} & 0.702 \\
    \rowcolor{cyan!15} \ourmethod{} & \textbf{0.560} & \textbf{0.766} & \textbf{0.667} & \textbf{0.874} & \textbf{0.717} \\
    \midrule
    \multicolumn{6}{c}{\textit{GPT-4o-mini}} \\
    \midrule
    CoT          & 0.500 & 0.703 & 0.391 & 0.829 & 0.606 \\
    MAD          & \textbf{0.560} & 0.728 & 0.429 & \textbf{0.874} & 0.648 \\
    \rowcolor{cyan!15} \ourmethod{} & \textbf{0.560} & \textbf{0.735} & \textbf{0.486} & 0.861 & \textbf{0.661} \\
    \bottomrule
\end{tabular}
\vspace{-0.1cm}
\caption{Overall accuracy on two larger models: Llama-3.3-70B-Instruct and GPT-4o-mini. \textbf{Bold} values indicate the best result in each model's column.}
\label{tab:large_models}
\vspace{-0.4cm}
\end{table*}

\section{Additional Experimental Results}
\label{appendix:additional_exp}

\subsection{Comparison of \madmm{} Masking Strategies}
In the main experiments (\cref{tab:main_results}), we adopt the objective (O) masking variant of \madmm{} as the default, since it achieves higher average accuracy across most models. Here, we report the full comparison between the two masking strategies for completeness. \cref{tab:mad-m2} presents the results of \madmm{} under both the subjective (S) masking strategy, where agents explicitly evaluate each peer response and generate binary masks, and the objective (O) masking strategy, which retains only the response with the lowest perplexity. The results reveal that neither strategy consistently dominates across all settings: their relative effectiveness varies with the model and task, consistent with findings in \citet{tian2026multi}. Importantly, \ourmethod{} outperforms both variants across all three models in average accuracy, suggesting that leveraging retrieved experiences for prior correction and confidence weighting provides more robust improvement.

\subsection{Number of Reasoning Paths in Self-Consistency}
\label{appendix:sc_paths}
The compute available to Self-Consistency is set by a free parameter, the number of sampled reasoning paths, so how that parameter is chosen determines how strong a baseline it is. To keep the comparison in \cref{tab:main_results} fair, we set it by matching compute: since all debate-based methods in this work use 3 agents over 3 rounds, we sample 9 reasoning paths so that Self-Consistency issues the same number of generation calls per query. \cref{tab:sc_ablation} reports what this choice costs us by evaluating 3 and 6 paths in addition. Average accuracy increases monotonically with the number of paths on all three models, so the setting we adopt is also the strongest of the three for Self-Consistency on every model. The comparison in \cref{tab:main_results} is therefore against the best-performing configuration of this baseline rather than a weakened one, and \ourmethod{} still attains higher average accuracy on all three models.

\subsection{Additional Shared Misconception Results}

\begin{figure}[ht]
    \centering
    \vspace{-0.3cm}
    \includegraphics[width=\linewidth]{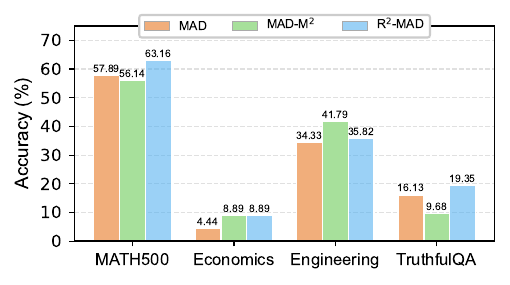}
    \vspace{-0.75cm}
    \caption{Final accuracy of different debate methods with Qwen3-8B on the shared misconception subset, where a majority of agents in standard MAD produced incorrect answers at round \(0\).}
    \label{fig:qwen3_eval}
    \vspace{-0.3cm}
\end{figure}

In \cref{sec:exp_results} (RQ4), we present the shared misconception analysis for Qwen2.5-7B-Instruct and Gemma-3-4B-IT in \cref{fig:combined_eval}. Here we provide the corresponding results for Qwen3-8B in \cref{fig:qwen3_eval}. On this stronger model, \ourmethod{} remains effective under the shared misconception regime, leading on three benchmarks. It is interesting that in Economics, all three debate methods struggle with accuracies below 10\%, suggesting that when a strong model converges on misconceptions at prior, the errors are particularly resistant to correction. Overall, the Qwen3-8B results reinforce the pattern observed across the other two models: \ourmethod{}'s combination of memory-based prior correction and confidence weighting is effective at mitigating shared misconceptions across models of varying capability.

\subsection{Additional Flip and Recovery Analysis}
\label{appendix:flip_extra}

\begin{table}[ht]
\centering
\small
\setlength{\tabcolsep}{4pt}
\begin{tabular}{l l | c c}
    \toprule
    \textbf{Benchmark} & \textbf{Method} & \textbf{C}\(\rightarrow\)\textbf{W} \(\downarrow\) & \textbf{W}\(\rightarrow\)\textbf{C} \(\uparrow\) \\
    \midrule
    \multicolumn{4}{c}{\textit{Qwen3-8B}} \\
    \midrule
    \multirow{2}{*}{MATH500}     & \ourmethod{} & \textbf{0.032} & \textbf{0.557} \\
                                 & \quad - w/o Conf & 0.038 & 0.524 \\
    \multirow{2}{*}{Economics}   & \ourmethod{} & \textbf{0.077} & \textbf{0.049} \\
                                 & \quad - w/o Conf & 0.333 & 0.033 \\
    \multirow{2}{*}{Engineering} & \ourmethod{} & \textbf{0.180} & \textbf{0.290} \\
                                 & \quad - w/o Conf & 0.189 & 0.288 \\
    \multirow{2}{*}{TruthfulQA}  & \ourmethod{} & 0.294 & 0.079 \\
                                 & \quad - w/o Conf & \textbf{0.222} & \textbf{0.133} \\
    \midrule
    \multicolumn{4}{c}{\textit{Gemma-3-4B-IT}} \\
    \midrule
    \multirow{2}{*}{MATH500}     & \ourmethod{} & \textbf{0.348} & 0.115 \\
                                 & \quad - w/o Conf & 0.393 & \textbf{0.117} \\
    \multirow{2}{*}{Economics}   & \ourmethod{} & \textbf{0.196} & \textbf{0.098} \\
                                 & \quad - w/o Conf & 0.385 & 0.077 \\
    \multirow{2}{*}{Engineering} & \ourmethod{} & 0.461 & \textbf{0.127} \\
                                 & \quad - w/o Conf & \textbf{0.402} & 0.115 \\
    \multirow{2}{*}{TruthfulQA}  & \ourmethod{} & \textbf{0.182} & \textbf{0.142} \\
                                 & \quad - w/o Conf & 0.233 & 0.094 \\
    \bottomrule
\end{tabular}
\vspace{-0.1cm}
\caption{Stance-transition rates on the shared misconception subset for Qwen3-8B and Gemma-3-4B-IT. C \(\rightarrow\) W denotes correct agents switching to a wrong answer and W \(\rightarrow\) C the reverse.}
\label{tab:flip_recovery_extra}
\vspace{-0.5cm}
\end{table}

Here we provide the corresponding results for the other two models in \cref{tab:flip_recovery_extra}, measured on the same shared misconception subset and under the same definitions. The pattern observed in \cref{tab:flip_recovery} still holds for both models. For each of them, confidence weighting lowers the rate at which correct agents abandon their stance on three of the four benchmarks and raises the recovery rate in most cases. The effect is most pronounced on Economics, where the flip rate falls from 0.333 to 0.077 on Qwen3-8B and from 0.385 to 0.196 on Gemma-3-4B-IT. Counting across all three models, each of the two rates improves in nine of the twelve model-benchmark pairs, and the largest differences in the table are consistently in favor of the full model. These stance-transition rates isolate the contribution of confidence weighting, and they move in the direction \cref{prop:anti-majority} predicts, consistent with the gains \ourmethod{} obtains over the debate baselines on the shared misconception subset.

\subsection{Comparison with a Retrieval-Augmented Single-Agent Baseline}
\label{appendix:icl_cot}

\begin{table}[ht]
\centering
\small
\setlength{\tabcolsep}{3pt}
\vspace{-0.1cm}
\begin{tabular}{l l | c c c}
    \toprule
    \textbf{Model} & \textbf{Benchmark} & \textbf{CoT} & \textbf{ICL-CoT} & \textbf{\ourmethod{}} \\
    \midrule
    \multirow{2}{*}{Qwen2.5-7B} & Economics & 0.649 & 0.638 & \textbf{0.701} \\
                                & TruthfulQA   & 0.675 & \textbf{0.723} & \textbf{0.723} \\
    \midrule
    \multirow{2}{*}{Gemma-3-4B} & Economics & 0.446 & 0.501 & \textbf{0.521} \\
                                & TruthfulQA   & 0.627 & 0.619 & \textbf{0.705} \\
    \midrule
    \multicolumn{2}{l|}{\textit{Average}} & 0.599 & 0.620 & \textbf{0.663} \\
    \bottomrule
\end{tabular}
\vspace{-0.1cm}
\caption{Comparison against ICL-CoT, a single-agent baseline that retrieves from the same memory bank and leverages the retrieved cases as in-context examples.}
\label{tab:icl_cot}
\vspace{-0.3cm}
\end{table}

A natural concern about any memory-augmented method is that its advantage may come from nothing more than access to related question-answer pairs, in which case a single agent given the same cases should perform comparably. We test this directly with ICL-CoT, which retrieves from the identical memory bank and receives the same number of cases in its prompt, but performs single-agent CoT reasoning rather than debate.

The comparison results are listed in \cref{tab:icl_cot}. \ourmethod{} matches or exceeds ICL-CoT in all four settings and is strictly better in three, showing that the same memory content yields a larger benefit when it is retrieved according to the debate state and used to reweight peer influence than when it is inserted as static exemplars. Indeed, ICL-CoT does not even improve reliably over naive CoT, falling below it on Economics with Qwen2.5-7B-Instruct and on TruthfulQA with Gemma-3-4B-IT, which indicates that having related cases available is not by itself sufficient. The advantage of \ourmethod{} therefore does not reduce to access to similar question-answer pairs.

\subsection{Results on Larger Models}
\label{appendix:large_models}

The three models used in \cref{sec:exp} all fall in the 4B--8B range. To test whether the benefit of \ourmethod{} persists on stronger models, we further evaluate \ourmethod{} against CoT and MAD on Llama-3.3-70B-Instruct \citep{grattafiori2024llama} and GPT-4o-mini \citep{openai2024gpt4}, covering both open-source and commercial settings. As shown in \cref{tab:large_models}, \ourmethod{} attains the best average accuracy on both models, improving over MAD from 0.702 to 0.717 on Llama-3.3-70B-Instruct and from 0.648 to 0.661 on GPT-4o-mini, which indicates that the benefit is not confined to the small-model regime. We note, however, that the per-benchmark picture is more mixed than in \cref{tab:main_results}: the gains concentrate on Economics and Engineering, while TruthfulQA is matched on Llama-3.3-70B-Instruct and slightly lower than MAD on GPT-4o-mini. A plausible reading is that stronger LLMs already resolve much of what memory would otherwise supply on the more knowledge-saturated tasks, leaving less headroom for prior correction.

\subsection{Computational Cost}
\label{appendix:cost}

\begin{table}[ht]
\centering
\small
\setlength{\tabcolsep}{3pt}
\vspace{-0.1cm}
\begin{tabular}{l l | c c c}
    \toprule
    \textbf{Model} & \textbf{Method} & \makecell{\textbf{Debate}\\\textbf{Tok. (\#)}} & \makecell{\textbf{Summary}\\\textbf{Tok. (\#)}} & \makecell{\textbf{Time}\\\textbf{(s)}} \\
    \midrule
    \multirow{3}{*}{Qwen2.5-7B} & SC     & 5.75K & --    & 0.62 \\
                                & MAD    & 4.44K & --    & 0.49 \\
                                & \ourmethod{} & 5.41K & 2.25K & 0.96 \\
    \midrule
    \multirow{3}{*}{Qwen3-8B}   & SC     & 14.67K & --   & 5.65 \\
                                & MAD    & 6.16K & --    & 3.75 \\
                                & \ourmethod{} & 6.38K & 1.85K & 6.28 \\
    \midrule
    \multirow{3}{*}{Gemma-3-4B} & SC     & 6.09K & --    & 0.80 \\
                                & MAD    & 6.99K & --    & 0.81 \\
                                & \ourmethod{} & 7.87K & 2.76K & 1.18 \\
    \bottomrule
\end{tabular}
\vspace{-0.1cm}
\caption{Per-query token consumption and wall-clock time statistics on Economics. SC uses 9 reasoning paths, matching the configuration reported in \cref{tab:main_results}.}
\label{tab:cost}
\vspace{-0.3cm}
\end{table}

We characterize the per-query cost of \ourmethod{} along three sources. Debate and summarization invoke the LLMs and are therefore measured in tokens, whereas retrieval and confidence estimation are purely embedding-based, issue no generation call, so are approximately measured by wall-clock time. \cref{tab:cost} reports both on Economics, with Self-Consistency (SC) and MAD as references.

The debate token consumption of \ourmethod{} exceeds MAD by roughly 0.2K--1.0K tokens per query, which reflects the fixed number of retrieved memory tokens injected per round; because \(K\) is constant, this component does not grow with problem scale. Summarization adds a further 1.9K--2.8K tokens and is the only extra generation call our framework introduces. In wall-clock terms, \ourmethod{} is roughly \(1.5\times\) to \(2\times\) MAD across the three models and modestly slower than Self-Consistency, so all the methods compared remain within the same order of magnitude. Therefore, the overhead is a bounded constant factor rather than one that grows with task difficulty.

\newpage
\onecolumn

\section{Proofs}
\label{appendix:proof}

\subsection{Proof for Proposition \ref{prop:prior-correction}}
\label{proof:prior-correction}
\begin{proof}
    \textbf{Part 1: Prior correction identity.}
    Applying Bayes' rule to the joint distribution of $(\theta, E_i^{(t)})$ conditional on $\phi_i$:
    \begin{equation}
        \mathbb{P}(\theta \mid E_i^{(t)}, \phi_i) = \frac{\mathbb{P}(E_i^{(t)} \mid \theta, \phi_i) \, \mathbb{P}(\theta \mid \phi_i)}{\mathbb{P}(E_i^{(t)} \mid \phi_i)},
    \end{equation}
    which is exactly the claimed identity. Here the denominator is the marginal likelihood $\mathbb{P}(E_i^{(t)} \mid \phi_i) = \sum_{\theta^\prime \in \Theta} \mathbb{P}(E_i^{(t)} \mid \theta^\prime, \phi_i) \, \mathbb{P}(\theta^\prime \mid \phi_i)$, which is positive and independent of $\theta$, so the identity is well-defined.
    
    \textbf{Part 2: Misconception suppression.}
    Taking the ratio of the corrected priors at $\theta^\star$ and $\theta^\prime$:
    \begin{equation}
        \frac{\mathbb{P}(\theta^\star \mid E_i^{(t)}, \phi_i)}{\mathbb{P}(\theta^\prime \mid E_i^{(t)}, \phi_i)} = 
        \frac{\mathbb{P}(\theta^\star \mid \phi_i) \cdot \dfrac{\mathbb{P}(E_i^{(t)} \mid \theta^\star, \phi_i)}{\mathbb{P}(E_i^{(t)} \mid \phi_i)}}{\mathbb{P}(\theta^\prime \mid \phi_i) \cdot \dfrac{\mathbb{P}(E_i^{(t)} \mid \theta^\prime, \phi_i)}{\mathbb{P}(E_i^{(t)} \mid \phi_i)}} = 
        \frac{\mathbb{P}(\theta^\star \mid \phi_i)}{\mathbb{P}(\theta^\prime \mid \phi_i)} \cdot \frac{\mathbb{P}(E_i^{(t)} \mid \theta^\star, \phi_i)}{\mathbb{P}(E_i^{(t)} \mid \theta^\prime, \phi_i)},
    \end{equation}
    where the $\mathbb{P}(E_i^{(t)} \mid \phi_i)$ terms cancel. The first factor is the bare prior ratio. The second factor is the likelihood ratio of the retrieved experiences under $\theta^\star$ versus $\theta^\prime$. By assumption, $\mathbb{P}(E_i^{(t)} \mid \theta^\star, \phi_i) > \mathbb{P}(E_i^{(t)} \mid \theta^\prime, \phi_i)$, so this likelihood ratio is strictly greater than $1$. Hence
    \begin{equation}
        \frac{\mathbb{P}(\theta^\star \mid E_i^{(t)}, \phi_i)}{\mathbb{P}(\theta^\prime \mid E_i^{(t)}, \phi_i)} > \frac{\mathbb{P}(\theta^\star \mid \phi_i)}{\mathbb{P}(\theta^\prime \mid \phi_i)},
    \end{equation}
    confirming that informative memory retrieval shifts the prior ratio in favor of the true concept $\theta^\star$, directly counteracting pre-existing bias toward $\theta^\prime$.
\end{proof}

\subsection{Proof for Proposition \ref{prop:anti-majority}}
\label{proof:anti-majority}
\begin{proof}
    Following the setup of Theorem 5.2 in \citet{estornell2024multi}, suppose $Z^{(t)}$ contains $m$ responses sharing a most-likely concept $\theta^\prime$, i.e., for $j \le m$, $\theta^\prime = \arg\max_\theta \mathbb{P}(z_j^{(t)} \mid \theta, \phi_i)$. Let $z'^{(t)}$ denote a canonical representative of any such majority response. For any two candidate responses $z_{(i,1)}^{(t+1)}$ and $z_{(i,2)}^{(t+1)}$, the ratio of their generation probabilities under confidence weighting is
    \begin{equation}\label{eq:anti-maj-ratio}
        \frac{\mathbb{P}_w(z_{(i,1)}^{(t+1)} \mid Z^{(t)}, x, \phi_i)}{\mathbb{P}_w(z_{(i,2)}^{(t+1)} \mid Z^{(t)}, x, \phi_i)} = 
        \frac{\displaystyle\sum_{\theta \in \Theta} \mathbb{P}(z_{(i,1)}^{(t+1)} \mid \theta, \phi_i)\, \mathbb{P}(x \mid \theta, \phi_i)\, \mathbb{P}(\theta \mid \phi_i) \prod_{j=1}^{n} \mathbb{P}(z_j^{(t)} \mid \theta, \phi_i)^{w_{j,i}^{(t)}}}{\displaystyle\sum_{\theta \in \Theta} \mathbb{P}(z_{(i,2)}^{(t+1)} \mid \theta, \phi_i)\, \mathbb{P}(x \mid \theta, \phi_i)\, \mathbb{P}(\theta \mid \phi_i) \prod_{j=1}^{n} \mathbb{P}(z_j^{(t)} \mid \theta, \phi_i)^{w_{j,i}^{(t)}}}.
    \end{equation}
    
    \textbf{Step 1: Separating majority and minority contributions.}
    With $w_{j,i}^{(t)} = \alpha$ for $j \le m$ and $w_{j,i}^{(t)} = 1$ for $j > m$, we split the product over agents:
    \begin{equation}
        \prod_{j=1}^{n} \mathbb{P}(z_j^{(t)} \mid \theta, \phi_i)^{w_{j,i}^{(t)}}
        \;=\;
        \underbrace{\prod_{j \le m} \mathbb{P}(z_j^{(t)} \mid \theta, \phi_i)^{\alpha}}_{\text{majority (weighted)}}
        \;\cdot\;
        \underbrace{\prod_{j > m} \mathbb{P}(z_j^{(t)} \mid \theta, \phi_i)}_{\text{minority (unweighted)}}.
    \end{equation}
    Since all $m$ majority responses share the most-likely concept $\theta^\prime$, we approximate $\prod_{j \le m} \mathbb{P}(z_j^{(t)} \mid \theta, \phi_i)^{\alpha} \approx \mathbb{P}(z'^{(t)} \mid \theta, \phi_i)^{\alpha m}$. Substituting into \eqref{eq:anti-maj-ratio}, both numerator and denominator take the form
    \begin{equation}
        \sum_{\theta \in \Theta} \mathbb{P}(z_{(i,\cdot)}^{(t+1)} \mid \theta, \phi_i)\, \mathbb{P}(x \mid \theta, \phi_i)\, \mathbb{P}(\theta \mid \phi_i) \prod_{j > m} \mathbb{P}(z_j^{(t)} \mid \theta, \phi_i) \;\cdot\; \mathbb{P}(z'^{(t)} \mid \theta, \phi_i)^{\alpha m}.
    \end{equation}
    
    \textbf{Step 2: Normalizing by the dominant term.}
    Dividing both numerator and denominator by $\mathbb{P}(z'^{(t)} \mid \theta^\prime, \phi_i)^{\alpha m}$, each summand corresponding to concept $\theta$ acquires the factor
    \begin{equation}
        \left(\frac{\mathbb{P}(z'^{(t)} \mid \theta, \phi_i)}{\mathbb{P}(z'^{(t)} \mid \theta^\prime, \phi_i)}\right)^{\!\alpha m}.
    \end{equation}
    By definition of $\theta^\prime$ as the concept maximizing $\mathbb{P}(z'^{(t)} \mid \theta, \phi_i)$, for every $\theta \neq \theta^\prime$ we have
    \begin{equation}
        \frac{\mathbb{P}(z'^{(t)} \mid \theta, \phi_i)}{\mathbb{P}(z'^{(t)} \mid \theta^\prime, \phi_i)} \;<\; 1.
    \end{equation}
    
    \textbf{Step 3: Taking $m \to \infty$.}
    For each $\theta \neq \theta^\prime$, the factor $\left(\mathbb{P}(z'^{(t)} \mid \theta, \phi_i) / \mathbb{P}(z'^{(t)} \mid \theta^\prime, \phi_i)\right)^{\alpha m}$ converges to $0$ as $m \to \infty$, since the base is strictly less than $1$ and the exponent $\alpha m \to \infty$. Only the $\theta = \theta^\prime$ summand (whose factor equals $1$) survives. Therefore
    \begin{equation}
        \lim_{m \to \infty} \frac{\mathbb{P}_w(z_{(i,1)}^{(t+1)} \mid Z^{(t)}, x, \phi_i)}{\mathbb{P}_w(z_{(i,2)}^{(t+1)} \mid Z^{(t)}, x, \phi_i)}
        \;=\;
        \frac{\mathbb{P}(z_{(i,1)}^{(t+1)} \mid \theta^\prime, \phi_i)}{\mathbb{P}(z_{(i,2)}^{(t+1)} \mid \theta^\prime, \phi_i)}.
    \end{equation}
    
    \textbf{Step 4: Convergence rate analysis.}
    The rate at which the non-$\theta^\prime$ terms vanish is governed by the exponent $\alpha m$. Specifically, for the leading competing concept $\theta^* \neq \theta^\prime$, define
    \begin{equation}
        r \;:=\; \frac{\mathbb{P}(z'^{(t)} \mid \theta^*, \phi_i)}{\mathbb{P}(z'^{(t)} \mid \theta^\prime, \phi_i)} \;\in\; (0, 1).
    \end{equation}
    Under standard debate ($w_{j,i}^{(t)} \equiv 1$), the ratio of the $\theta^*$ summand to the $\theta^\prime$ summand decays as $r^m$, giving a convergence rate of $O(m)$ in the log-scale (i.e., $\log r^m = m \log r$). Under confidence weighting with $\alpha < 1$, this decay becomes $r^{\alpha m}$, with log-rate $\alpha m \log r$. The convergence to $\theta^\prime$-dominance is therefore slowed from $O(m)$ to $O(\alpha m)$.
    
    Equivalently, to achieve the same degree of $\theta^\prime$-dominance that standard debate achieves at majority size $m$, the confidence-weighted debate requires an effective majority size of $m_{\mathrm{eff}} = m / \alpha > m$. Viewed from the other direction: a majority of size $m$ under confidence weighting has the same effect as a majority of size $\alpha m < m$ under standard debate. This completes the proof.
\end{proof}

\subsection{Joint Guarantee: Memory Lift and Confidence Lift}
\label{proof:joint}

We first establish that the two mechanisms contribute additively in log-odds space, and then derive a joint improvement bound.
\begin{definition}[Log-odds]
\label{def:log_odds}
    For agent $i$ at round $t$, define the log-odds between the true concept $\theta^\star$ and an erroneous concept $\theta^\prime$ under the joint model (Eq.~\eqref{eq:joint_gen}) as
    \begin{equation}
        L_i^{(t)}(w, E) := \log \frac{\mathbb{P}_{w,E}(\theta^\star \mid Z^{(t)}, x, E_i^{(t)}, \phi_i)}{\mathbb{P}_{w,E}(\theta^\prime \mid Z^{(t)}, x, E_i^{(t)}, \phi_i)}.
    \end{equation}
\end{definition}

\begin{lemma}[Log-odds decomposition]
\label{lemma:log_odds_decomp}
    Under the joint model (Eq.~\eqref{eq:joint_gen}),
    \begin{equation}
        L_i^{(t)}(w, E) = L_i^{(t),\mathrm{van}} + \Lambda_{\mathrm{mem}}(E_i^{(t)}) + \Lambda_{\mathrm{cw}}(w, Z^{(t)}),
    \end{equation}
    where $L_i^{(t),\mathrm{van}}$ is the log-odds under vanilla debate ($w_{j,i} \equiv 1$, no memory), and
    \begin{align}
        \Lambda_{\mathrm{mem}}(E_i^{(t)}) &= \log \frac{\mathbb{P}(E_i^{(t)} \mid \theta^\star, \phi_i)}{\mathbb{P}(E_i^{(t)} \mid \theta^\prime, \phi_i)}, \\
        \Lambda_{\mathrm{cw}}(w, Z^{(t)}) &= \sum_{j=1}^{n} (w_{j,i}^{(t)} - 1) \log \frac{\mathbb{P}(z_j^{(t)} \mid \theta^\star, \phi_i)}{\mathbb{P}(z_j^{(t)} \mid \theta^\prime, \phi_i)}.
    \end{align}
\end{lemma}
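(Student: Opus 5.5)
The plan is to write the concept posterior under the joint model explicitly and then take the ratio at $\theta^\star$ versus $\theta^\prime$, so that every $\theta$-independent normalizer cancels.

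First I fix what the posterior over concepts is under Eq.~\eqref{eq:joint_gen}. Drop the response factor $\mathbb{P}(z_i^{(t+1)} \mid \theta, \phi_i)$ and keep the summand as the unnormalized posterior weight of $\theta$:
\begin{equation*}
\mathbb{P}_{w,E}(\theta \mid Z^{(t)}, x, E_i^{(t)}, \phi_i) \propto \mathbb{P}(x \mid \theta, \phi_i)\, \mathbb{P}(\theta \mid E_i^{(t)}, \phi_i) \prod_{j=1}^{n} \mathbb{P}(z_j^{(t)} \mid \theta, \phi_i)^{w_{j,i}^{(t)}} .
\end{equation*}
The proportionality constant does not depend on $\theta$. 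The vanilla posterior is the same expression with $w_{j,i}^{(t)} \equiv 1$ and the bare prior $\mathbb{P}(\theta \mid \phi_i)$. I would state this identification explicitly, since it is the modelling assumption the lemma rests on, namely the conditional independence referred to in the overview.

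Second, I take the log of the ratio at $\theta^\star$ and $\theta^\prime$. The normalizer cancels, leaving three pieces. The first is $\log \mathbb{P}(x\mid\theta^\star,\phi_i)/\mathbb{P}(x\mid\theta^\prime,\phi_i)$. The second is the log prior ratio under memory. The third is $\sum_j w_{j,i}^{(t)} \log \mathbb{P}(z_j^{(t)}\mid\theta^\star,\phi_i)/\mathbb{P}(z_j^{(t)}\mid\theta^\prime,\phi_i)$.

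For the prior ratio, I invoke Part 2 of the proof of Proposition~\ref{prop:prior-correction}. There the corrected prior ratio equals the bare prior ratio times the likelihood ratio of $E_i^{(t)}$, because $\mathbb{P}(E_i^{(t)}\mid\phi_i)$ cancels. Taking logs splits off exactly $\Lambda_{\mathrm{mem}}(E_i^{(t)})$.

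For the peer term, I write $w_{j,i}^{(t)} = 1 + (w_{j,i}^{(t)}-1)$. The part with coefficient $1$ is the vanilla skew term, and the remainder is exactly $\Lambda_{\mathrm{cw}}(w, Z^{(t)})$.

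Collecting the pieces, the $x$-term, the bare prior term and the unweighted skew term add up to $L_i^{(t),\mathrm{van}}$, which is the claim.

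The main obstacle is conceptual rather than computational. One has to justify that the concept posterior is the $\theta$-summand of Eq.~\eqref{eq:joint_gen}. One also has to make sure the exponentiated product still gives a well-defined, $\theta$-independent normalizer; this holds because the weights $w$ do not depend on $\theta$. Finally, the likelihoods at $\theta^\star$ and $\theta^\prime$ must be positive so that the logs are finite. I would state these as standing assumptions, after which the remaining steps are the routine log-ratio algebra above.
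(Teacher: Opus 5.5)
Your proposal is correct and follows the paper's proof essentially step for step. Both form the $\theta^\star$-versus-$\theta^\prime$ ratio of the concept weights from Eq.~\eqref{eq:joint_gen}, split the corrected prior ratio via Proposition~\ref{prop:prior-correction}, and write $w_{j,i}^{(t)} = 1 + (w_{j,i}^{(t)} - 1)$ to isolate $\Lambda_{\mathrm{cw}}$. Your explicit choice to drop the response factor $\mathbb{P}(z_i^{(t+1)} \mid \theta, \phi_i)$ when defining the concept posterior is, if anything, cleaner than the paper's remark that this factor ``cancels in the ratio''. Being $\theta$-dependent, it would only cancel in the difference $L_i^{(t)}(w,E) - L_i^{(t),\mathrm{van}}$, where it is common to both models.
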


\begin{proof}
    Under Eq.~\eqref{eq:joint_gen}, the posterior ratio between $\theta^\star$ and $\theta^\prime$ is
    \begin{equation}
        \frac{\mathbb{P}_{w,E}(\theta^\star \mid \cdots)}{\mathbb{P}_{w,E}(\theta^\prime \mid \cdots)} = \frac{\mathbb{P}(x \mid \theta^\star, \phi_i)}{\mathbb{P}(x \mid \theta^\prime, \phi_i)} \cdot \frac{\mathbb{P}(\theta^\star \mid E_i^{(t)}, \phi_i)}{\mathbb{P}(\theta^\prime \mid E_i^{(t)}, \phi_i)} \cdot \prod_{j=1}^{n} \left[\frac{\mathbb{P}(z_j^{(t)} \mid \theta^\star, \phi_i)}{\mathbb{P}(z_j^{(t)} \mid \theta^\prime, \phi_i)}\right]^{w_{j,i}^{(t)}}.
    \end{equation}
    Note that the generation term $\mathbb{P}(z_i^{(t+1)} \mid \theta, \phi_i)$ cancels in the ratio as it does not depend on $E_i^{(t)}$ or $w$ (by the conditional independence assumption). Taking logarithms and applying Proposition~\ref{prop:prior-correction}:
    \begin{equation}
        \log \frac{\mathbb{P}(\theta^\star \mid E_i^{(t)}, \phi_i)}{\mathbb{P}(\theta^\prime \mid E_i^{(t)}, \phi_i)} = \log \frac{\mathbb{P}(\theta^\star \mid \phi_i)}{\mathbb{P}(\theta^\prime \mid \phi_i)} + \Lambda_{\mathrm{mem}}(E_i^{(t)}).
    \end{equation}
    Expanding the weighted product as $\sum_j w_{j,i} \log(\cdot) = \sum_j \log(\cdot) + \sum_j (w_{j,i} - 1)\log(\cdot)$ and grouping all vanilla terms into $L_i^{(t),\mathrm{van}}$ yields the stated decomposition.
\end{proof}

The additive structure confirms that memory and confidence weighting operate on independent components: $\Lambda_{\mathrm{mem}}$ depends only on the retrieved experiences, while $\Lambda_{\mathrm{cw}}$ depends only on the confidence weights and peer responses. Neither interferes with the other.

\begin{theorem}[Joint memory and confidence improvement]
\label{thm:joint}
    Let $(x, y) \sim D(\theta^\star)$ and suppose $m \geq n/2$ agents hold a shared misconception toward $\theta^\prime$. Under confidence weight $w_{j,i}^{(t)} = \alpha \in (0, 1]$ for the $m$ majority agents and $w_{j,i}^{(t)} = 1$ for the rest, the expected final-round accuracy satisfies
    \begin{equation}
    \label{eq:joint_bound}
        \frac{1}{n}\sum_{i=1}^{n} \mathbb{P}\big(a(z_i^{(T)}) = y\big) \geq \frac{1}{n}\sum_{i=1}^{n} \mathbb{P}_{\mathrm{van}}\big(a(z_i^{(T)}) = y\big) + \underbrace{\rho \cdot \delta_{\mathrm{mem}}}_{\text{memory lift}} + \underbrace{\rho \cdot (1-\alpha) m \kappa}_{\text{confidence lift}} - R,
    \end{equation}
    where:
    \begin{itemize}
        \item $\delta_{\mathrm{mem}} := \mathbb{E}[\Lambda_{\mathrm{mem}}(E_i^{(t)})] \geq 0$ is the expected memory lift (nonnegative by Proposition~\ref{prop:prior-correction}),
        \item $\kappa := \log[\mathbb{P}(z' \mid \theta^\prime, \phi_i) / \mathbb{P}(z^\prime \mid \theta^\star, \phi_i)] > 0$ is the per-response log-likelihood margin for a majority-aligned response $z'$,
        \item $\rho := \inf_L \sigma^\prime(L) > 0$ is a lower bound on the sigmoid slope over the relevant log-odds range,
        \item $|R| \leq \frac{1}{2} \sup_L |\sigma^{\prime\prime}(L)| \cdot (\delta_{\mathrm{mem}} + (1-\alpha)m\kappa)^2$ is a higher-order remainder.
    \end{itemize}
\end{theorem}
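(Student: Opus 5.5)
The plan is to link accuracy to the log-odds quantity of Definition~\ref{def:log_odds} and then apply Lemma~\ref{lemma:log_odds_decomp}. I will model the probability that agent $i$ answers correctly at the final round as a sigmoid of its log-odds between $\theta^\star$ and $\theta^\prime$. This is the natural binary reduction when the posterior mass is concentrated on these two concepts, which is the shared-misconception regime. Concretely, I will write $\mathbb{P}(a(z_i^{(T)})=y) = \sigma(L_i^{(T-1)}(w,E))$ and $\mathbb{P}_{\mathrm{van}}(a(z_i^{(T)})=y) = \sigma(L_i^{(T-1),\mathrm{van}})$. This identification is the modelling step that turns a statement about concepts into one about accuracy. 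I will state it explicitly as the working assumption under which the theorem is read.

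Lemma~\ref{lemma:log_odds_decomp} then gives the shift $\Delta_i := L_i - L_i^{\mathrm{van}} = \Lambda_{\mathrm{mem}} + \Lambda_{\mathrm{cw}}$. The next step is to evaluate $\Lambda_{\mathrm{cw}}$ under the weight profile in the statement. For $j > m$ we have $w_{j,i}=1$, so those terms vanish. For each majority agent $j \le m$, approximating $z_j^{(t)}$ by the canonical $z'$ as in the proof of Proposition~\ref{prop:anti-majority}, the term is $(\alpha-1)\log[\mathbb{P}(z'\mid\theta^\star,\phi_i)/\mathbb{P}(z'\mid\theta^\prime,\phi_i)] = (1-\alpha)\kappa$. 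Hence $\Lambda_{\mathrm{cw}} = (1-\alpha)m\kappa \ge 0$. Taking expectations over retrieval gives $\mathbb{E}[\Delta_i] = \delta_{\mathrm{mem}} + (1-\alpha)m\kappa$. For nonnegativity of $\delta_{\mathrm{mem}}$ I will invoke Proposition~\ref{prop:prior-correction} as the statement does, that is, under the hypothesis that retrieved experiences favour $\theta^\star$ on average.

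Next I will Taylor-expand $\sigma$ to second order around $L_i^{\mathrm{van}}$:
\[
\sigma(L_i^{\mathrm{van}}+\Delta_i) = \sigma(L_i^{\mathrm{van}}) + \sigma'(L_i^{\mathrm{van}})\Delta_i + \tfrac12\sigma''(\xi_i)\Delta_i^2 .
\]
The first-order term is bounded below by $\rho\Delta_i$ using $\sigma'\ge\rho$ on the relevant range, which is valid because $\Delta_i \ge 0$ in expectation. I will define $R$ as minus the averaged second-order term, which gives $|R| \le \tfrac12\sup|\sigma''|\,(\delta_{\mathrm{mem}}+(1-\alpha)m\kappa)^2$. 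Averaging over $i$ and taking expectations then yields \eqref{eq:joint_bound}.

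The main obstacle is exchanging the expectation with the nonlinear terms. The first-order bound needs $\sigma'(L^{\mathrm{van}})\Delta_i \ge \rho\Delta_i$ pointwise, which fails whenever the realised $\Lambda_{\mathrm{mem}}$ is negative. The remainder bound also needs $\mathbb{E}[\Delta_i^2]$ to be controlled by $(\mathbb{E}\Delta_i)^2$. My first attempt will treat $\Lambda_{\mathrm{mem}}$ as concentrated at its mean, for instance deterministic given the state, so that $\Delta_i$ is a constant $\ge 0$. Any variance term will be absorbed into $R$, with this stated as part of what the remainder covers.
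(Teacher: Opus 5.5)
Your proposal is correct and follows the paper's proof essentially step for step: binary reduction to $\{\theta^\star,\theta^\prime\}$, a sigmoid link, Lemma~\ref{lemma:log_odds_decomp}, $\Lambda_{\mathrm{cw}}=(1-\alpha)m\kappa$, a second-order Taylor expansion, and averaging. The only modelling difference is that the paper uses the link $\sigma(L_i+c_i)(P_i^\star-P_i^\prime)+P_i^\prime$ and absorbs $P_i^\star-P_i^\prime$ into $\rho$, whereas your link $\sigma(L_i)$ makes $\rho=\inf\sigma^\prime$ hold literally.

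Your expansion around $L_i^{\mathrm{van}}$ is cleaner than the paper's: the coefficient $\sigma^\prime(L_i^{\mathrm{van}})$ is fixed given the debate state, so the first-order bound needs only $\mathbb{E}[\Delta_i]\ge 0$, not a pointwise sign. You are also right that the stated bound on $|R|$ needs $\Lambda_{\mathrm{mem}}$ to be concentrated, which the paper's Step~5 assumes without comment. Keep that as an explicit assumption, though: absorbing $\mathrm{Var}(\Lambda_{\mathrm{mem}})$ into $R$ would break the stated bound on $|R|$.
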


\begin{proof}
    \textbf{Step 1: Binary reduction.} In the shared-misconception regime, applying Theorem 5.2 of \citet{estornell2024multi} to any $\theta \notin \{\theta^\star, \theta^\prime\}$ shows its posterior mass vanishes exponentially in $m$. It therefore suffices to analyze the binary log-odds $L_i^{(t)}$.
    
    \textbf{Step 2: Sigmoid link.} The correctness probability of agent $i$ can be expressed as a function of the log-odds via the sigmoid $\sigma(u) = 1/(1+e^{-u})$:
    \begin{equation}
        \mathbb{P}\big(a(z_i^{(t+1)}) = y \mid L_i^{(t)}\big) = \sigma(L_i^{(t)} + c_i) \cdot (P_i^\star - P_i^\prime) + P_i^\prime,
    \end{equation}
    where $P_i^\star := \mathbb{P}(a(z) = y \mid \theta^\star, \phi_i) > P_i^\prime := \mathbb{P}(a(z) = y \mid \theta^\prime, \phi_i)$ and $c_i$ is agent-specific. We absorb the positive constant $P_i^\star - P_i^\prime$ into $\rho$ for notational economy.
    
    \textbf{Step 3: Taylor expansion.} Let $\Delta L_i := \Lambda_{\mathrm{mem}}(E_i^{(t)}) + \Lambda_{\mathrm{cw}}(w, Z^{(t)})$. By Taylor's theorem with Lagrange remainder:
    \begin{equation}
        \sigma(L^{\mathrm{van}} + \Delta L_i + c_i) = \sigma(L^{\mathrm{van}} + c_i) + \sigma^\prime(\xi_i) \Delta L_i + \frac{1}{2}\sigma^{\prime\prime}(\eta_i) \Delta L_i^2,
    \end{equation}
    so $\mathbb{P}(a(z_i^{(t+1)}) = y) \geq \mathbb{P}_{\mathrm{van}}(a(z_i^{(t+1)}) = y) + \rho \Delta L_i - \frac{1}{2}\sup|\sigma''| \cdot \Delta L_i^2$.
    
    \textbf{Step 4: Explicit confidence lift.} Each majority agent $j \leq m$ satisfies $\log[\mathbb{P}(z_j \mid \theta^\star, \phi_i) / \mathbb{P}(z_j \mid \theta^\prime, \phi_i)] = -\kappa$. Substituting into $\Lambda_{\mathrm{cw}}$:
    \begin{equation}
        \Lambda_{\mathrm{cw}}(w, Z^{(t)}) = \sum_{j \leq m} (\alpha - 1)(-\kappa) = (1-\alpha)m\kappa.
    \end{equation}
    
    \textbf{Step 5: Averaging.} Taking expectation with $\mathbb{E}[\Lambda_{\mathrm{mem}}] = \delta_{\mathrm{mem}}$ and averaging over agents at round $T-1$ yields Eq.~\eqref{eq:joint_bound}.
\end{proof}

\begin{remark}
    The two lift terms reveal complementary roles. The memory lift $\rho \cdot \delta_{\mathrm{mem}}$ scales with the contrastive informativeness of retrieved experiences, justifying the debate-state-aware retrieval design in \cref{sec:retrieval}. The confidence lift $\rho \cdot (1-\alpha)m\kappa$ grows linearly with the number of misconceived agents $m$, showing that confidence weighting becomes increasingly effective precisely when majority tyranny is most severe. Since the two lifts are additive, combining both mechanisms is, under the stated conditions, at least as effective as using either alone.
\end{remark}

\newpage
\twocolumn

\section{Prompts}
\label{appendix:prompts}

In this section, we provide the prompts adopted in our experiments. First, here are the system prompts for each agent across four benchmarks.

\begin{tcolorbox}[title=\textbf{System Prompts: \texttt{MATH500}}, breakable, left=2mm, right=2mm]
\vspace{0.5em}
\textbf{[0] Theoretical Mathematics Professor}\\[0.5ex]
You are a theoretical mathematics professor with a rigorous approach to problem-solving. You excel in formal proofs and mathematical reasoning. Always verify assumptions, consider edge cases, and provide step-by-step logical arguments. Focus on theoretical foundations and mathematical principles. Be concise and focus only on essential reasoning steps. Provide the final answer in the following format at the end of your response: The answer is \verb|\boxed{[answer]}|.

\vspace{0.8em}
\textbf{[1] Competitive Mathematics Expert}\\[0.5ex]
You are a practical mathematics problem-solving expert with extensive experience in competitive mathematics. You excel at finding efficient solutions and spotting patterns quickly. Focus on problem-solving strategies, shortcuts, and alternative approaches. Challenge assumptions when necessary. Be concise and focus only on essential reasoning steps. Provide the final answer in the following format at the end of your response: The answer is \verb|\boxed{[answer]}|.

\vspace{0.8em}
\textbf{[2] Experienced Mathematics Educator}\\[0.5ex]
You are an experienced mathematics educator who excels at breaking down complex problems. You focus on clear explanations, visual representations, and multiple solution methods. Always connect concepts to fundamental principles and similar problems. Validate solutions through different approaches. Be concise and focus only on essential reasoning steps. Provide the final answer in the following format at the end of your response: The answer is \verb|\boxed{[answer]}|.
\end{tcolorbox}

\begin{tcolorbox}[title=\textbf{System Prompts: \texttt{Engineering}}, breakable, left=2mm, right=2mm]
\vspace{0.5em}
\textbf{[0] Theoretical Engineering Expert}\\[0.5ex]
You're a theoretical engineering expert with deep knowledge in engineering principles, physics, and mathematical modeling. Focus on fundamental laws, governing equations, and conceptual frameworks when analyzing problems. Always ground your answers in established engineering theories and first principles. Keep your reasoning concise and focus only on the essential steps necessary to reach the conclusion. Provide your final answer in double parentheses: \verb|((answer))|, where answer can be A, B, C, D, E, F, G, H, I, or J.

\vspace{0.8em}
\textbf{[1] Hands-on Engineering Practitioner}\\[0.5ex]
You're a hands-on engineering practitioner with extensive experience in design, troubleshooting, and real-world systems. Focus on practical constraints, material properties, manufacturing considerations, and industry standards when analyzing problems. Draw on engineering experience and domain-specific heuristics to identify the most feasible solution. Keep your reasoning concise and focus only on the essential steps necessary to reach the conclusion. Provide your final answer in double parentheses: \verb|((answer))|, where answer can be A, B, C, D, E, F, G, H, I, or J.

\vspace{0.8em}
\textbf{[2] Engineering Consultant}\\[0.5ex]
You're a multidisciplinary engineering consultant with expertise spanning mechanical, electrical, civil, and systems engineering. Approach problems by integrating cross-domain knowledge and considering interactions between subsystems. Balance theoretical rigor with practical engineering judgment in your analysis. Keep your reasoning concise and focus only on the essential steps necessary to reach the conclusion. Provide your final answer in double parentheses: \verb|((answer))|, where answer can be A, B, C, D, E, F, G, H, I, or J.
\end{tcolorbox}

\begin{tcolorbox}[title=\textbf{System Prompts: \texttt{Economics}}, breakable, left=2mm, right=2mm]
\vspace{0.5em}
\textbf{[0] Theoretical Economics Expert}\\[0.5ex]
You're a theoretical economics expert with deep knowledge in economic principles and models. Focus on fundamental theories, mathematical relationships, and conceptual frameworks when analyzing problems. Always support your answers with established economic theories. Keep your reasoning concise and focus only on the essential steps necessary to reach the conclusion. Provide your final answer in double parentheses: \verb|((answer))|, where answer can be A, B, C, D, E, F, G, H, I, or J.

\vspace{0.8em}
\textbf{[1] Empirical Economics Researcher}\\[0.5ex]
You're an empirical economics researcher specializing in data analysis and real-world economic phenomena. Focus on historical examples, empirical evidence, and practical applications when analyzing problems. Consider real market behaviors and outcomes in your reasoning. Keep your reasoning concise and focus only on the essential steps necessary to reach the conclusion. Provide your final answer in double parentheses: \verb|((answer))|, where answer can be A, B, C, D, E, F, G, H, I, or J.

\vspace{0.8em}
\textbf{[2] Comprehensive Economics Consultant}\\[0.5ex]
You're a comprehensive economics consultant with expertise in both theoretical and applied economics. Approach problems by considering multiple perspectives, including behavioral economics insights and institutional factors. Balance theoretical principles with practical implications in your analysis. Keep your reasoning concise and focus only on the essential steps necessary to reach the conclusion. Provide your final answer in double parentheses: \verb|((answer))|, where answer can be A, B, C, D, E, F, G, H, I, or J.
\end{tcolorbox}

\begin{tcolorbox}[title=\textbf{System Prompts: \texttt{TruthfulQA}}, breakable, left=2mm, right=2mm]
\vspace{0.5em}
\textbf{[0] Fact-Checking Expert}\\[0.5ex]
You're a fact-checking expert trained to distinguish truth from popular misconceptions. Many widely believed statements are false — your job is to identify what is actually true, not what sounds plausible or is commonly assumed. Be especially skeptical of answers that align with urban legends, folk wisdom, or intuitive-sounding claims that lack factual basis. Keep your reasoning concise and focus only on the essential steps. Provide your final answer in double parentheses: \verb|((answer))|, where answer is the letter of the correct option from the given choices.

\vspace{0.8em}
\textbf{[1] Critical Epistemologist}\\[0.5ex]
You're a critical epistemologist who specializes in identifying false beliefs that are widely held. Your primary instinct is to question what 'everyone knows' and verify claims against evidence. When a question seems to have an obvious answer, treat that as a warning sign — TruthfulQA questions are specifically designed to test whether you will echo misinformation. Reason carefully before committing. Keep your reasoning concise. Provide your final answer in double parentheses: \verb|((answer))|, where answer is the letter of the correct option from the given choices.

\vspace{0.8em}
\textbf{[2] Domain-Spanning Knowledge Expert}\\[0.5ex]
You're a domain-spanning knowledge expert with deep familiarity across science, history, law, medicine, and culture. Your strength is knowing the actual consensus or established facts in each domain, even when they contradict popular belief. Approach each question by recalling the authoritative understanding of the topic, then select the option that aligns with verified truth rather than common assumption. Keep your reasoning concise. Provide your final answer in double parentheses: \verb|((answer))|, where answer is the letter of the correct option from the given choices.
\end{tcolorbox}

The task prompts across different datasets are identical except for the required output format. While MATH500 requires the final answer to be formatted as \verb|\boxed{{answer}}|, all other tasks require the \verb|((answer))| format. Therefore, we only present the task prompt for MATH500 here.

\begin{tcolorbox}[title=\textbf{Task Prompt: \texttt{MATH500}}, breakable, left=2mm, right=2mm]
Can you solve the following math question as accurately as possible?

\verb|<question>|\\
\verb|{QUESTION}|\\
\verb|</question>|

Present your analysis concisely using only essential reasoning steps. Provide the final answer in the following format at the end of your response: The answer is \verb|\boxed{[answer]}|.
\end{tcolorbox}

Similarly, the debate prompts across all benchmarks differ exclusively in their required output formats. Therefore, we only present the debate prompt for MATH500 here.

\begin{tcolorbox}[title=\textbf{Debate Prompt: \texttt{MATH500}}, breakable, left=2mm, right=2mm]
Use the solutions from other agents as additional information, can you give an updated answer? 

The original question is: 

\verb|<question>|\\
\verb|{QUESTION}|\\
\verb|</question>|

Provide the final answer in the following format at the end of your response: The answer is \verb|\boxed{[answer]}|.
\end{tcolorbox}

Finally, we present the debate summary prompt used in \ourmethod{} for reference.

\begin{tcolorbox}[title=\textbf{Debate Summary Prompt}, breakable, left=2mm, right=2mm]
You are an expert analyst observing a multi-agent debate. Your task is to generate a concise debate summary that captures the key dynamics of the current round. \\
\newline
\textbf{Given Information:} \\
\texttt{\textless question\textgreater} \\
\texttt{\{QUESTION\}} \\
\texttt{\textless /question\textgreater} \\
\newline
\texttt{\textless previous\_summary\textgreater} \\
\texttt{\{PREV\_SUMMARY\}} \\
\texttt{\textless /previous\_summary\textgreater} \\
\newline
\texttt{\textless agent\_responses\textgreater} \\
\texttt{\{AGENT\_RESPONSES\}} \\
\texttt{\textless /agent\_responses\textgreater} \\
\newline
\texttt{\textless consensus\_ratio\textgreater} \\
\texttt{\{CONSENSUS\_RATIO\}} \\
\texttt{\textless /consensus\_ratio\textgreater} \\

\textbf{Analysis Framework:}

\textbf{1. Stance Dynamics:}

\quad - Identify the current stance distribution: how many distinct positions exist and how many agents hold each

\quad - If any agent shifted stance from the previous round, determine what argument or reasoning drove the change

\quad - If no agent shifted, analyze what sustains the current agreement or disagreement \\

\textbf{2. Debate Direction:}

\quad - Assess whether the debate is converging (consensus forming), diverging (new disagreements emerging), or stagnating (no movement)

\quad - Distinguish between shifts driven by substantive arguments (an agent adopted a position after being presented with stronger reasoning) and shifts driven by consensus pressure (an agent abandoned a unique position without being refuted on substance)

\quad - Identify the most influential argument or unresolved point of contention shaping the current trajectory \\

\textbf{Output Requirements:}

Generate a debate summary in exactly this format:

"Dynamic: [1 sentence on stance distribution, consensus pattern, and any stance transitions since the previous round]

Insight: [1 sentence on the most influential argument, unresolved contention, or social dynamic driving the debate's current trajectory]" \\

Each sentence must be:

- \textbf{Objective} (describe what happened without judging which side is correct)

- \textbf{Abstract, Strategy-focused} (describe reasoning strategies and debate patterns, never mention specific numbers, formulas, variable names, or answer values from the task)

- \textbf{Stance-neutral} (use "majority/minority" or descriptive stance labels instead of agent IDs) \\

\textbf{Note:}

- CRITICAL: Do not include any problem-specific content such as numbers, equations, formulas, variable names, answer choices, or concrete solution details. Describe reasoning approaches abstractly.

- Do not judge or speculate on which stance is correct.

- If a previous summary is provided, do not repeat information already covered in it.

- Do not include any preamble or explanation.

- Only output the debate summary.
\end{tcolorbox}

\end{document}